\def\paperversion{2}
\newtheorem{theorem}{Theorem}[section]
\newtheorem{proposition}[theorem]{Proposition}
\newtheorem{lemma}[theorem]{Lemma}
\theoremstyle{definition}
\newtheorem{definition}[theorem]{Definition}
\newtheorem{assumption}[theorem]{Assumption}
\newtheorem{remark}[theorem]{Remark}
\setlist[enumerate,1]{label=\normalfont{(\Roman*)},leftmargin=*}
\theoremstyle{TH}%
\DeclareMathAlphabet{\mathsfit}{T1}{\sfdefault}{\mddefault}{\sldefault}
\SetMathAlphabet{\mathsfit}{bold}{T1}{\sfdefault}{\bfdefault}{\sldefault}
\DeclareMathAlphabet{\mathcal}{OMS}{cmsy}{m}{n}
\newcommand{\cP}{\mathcal{P}}
\newcommand{\bR}{\mathbb{R}}
\newcommand{\bE}{\mathbb{E}}
\newcommand{\Reg}{\epsilon}
\newcommand{\diff}{\,\mathrm{d}}
\renewcommand{\emph}[1]{\textit{#1}}
\newcommand{\inp}[2]{\langle #1, #2 \rangle}
\title{Iterative Sampling Methods for Sinkhorn Distributionally Robust Optimization}
\author{
  Jie Wang \\
  School of Artificial Intelligence, School of Data Science \\
  The Chinese University of Hong Kong, Shenzhen \\
  \texttt{jwang@cuhk.edu.cn} \\
}
\begin{document}

\maketitle

\begin{abstract}
    Distributionally robust optimization (DRO) has emerged as a powerful paradigm for reliable decision-making under uncertainty. 
    This paper focuses on DRO with ambiguity sets defined via the Sinkhorn discrepancy—an entropy-regularized Wasserstein distance—referred to as Sinkhorn DRO. 
    Existing work primarily addresses Sinkhorn DRO from a dual perspective, leveraging its formulation as a conditional stochastic optimization problem, for which many stochastic gradient methods are applicable. 
    However, the theoretical analyses of such methods often rely on the boundedness of the loss function, and it is indirect to obtain the worst-case distribution associated with Sinkhorn DRO.
    In contrast, we study Sinkhorn DRO from the primal perspective, by reformulating it as a bilevel program with several infinite-dimensional lower-level subproblems over probability space.
    This formulation enables us to simultaneously obtain the optimal robust decision and the worst-case distribution, which is valuable in practical settings, such as generating stress-test scenarios or designing robust learning algorithms.
We propose both double-loop and single-loop sampling-based algorithms with theoretical guarantees to solve this bilevel program. Finally, we demonstrate the effectiveness of our approach through a numerical study on adversarial classification.
\end{abstract}

\section{Introduction}

Distributionally Robust Optimization (DRO) has emerged as a powerful framework for decision-making under uncertainty, aiming to find models that perform well under a set of plausible data distributions, known as the ambiguity set. A prominent approach defines this set using the Wasserstein distance~\citep{esfahani2018data, gao2023distributionally, blanchet2019quantifying, blanchet2021optimal}.
However, the practical application of Wasserstein DRO is limited by two things.
First, the resulting optimization problem is often intractable for general cases, except under some special conditions on the loss function and transportation cost~\citep{liu2021discrete, esfahani2018data, gao2023distributionally, Shafieezadeh15, sinha2018certifiable}, and its worst-case distributions are typically discrete, which can be unrealistic for underlying continuous data and lead to overly conservative decisions.

The Sinkhorn DRO formulation~\citep{wang2021sinkhorn, azizian2023regularization, wang2024regularization, vincent2024texttt, van2025perturbation} addresses these issues by employing an entropy-regularized Wasserstein distance~\citep{cuturi2013sinkhorn}, or called the Sinkhorn discrepancy. This regularization brings two critical benefits. 
First, this problem is tractable for a broad class of loss functions.
Second, it naturally encourages continuous worst-case distributions, often leading to improved generalization.
Due to these advantages, Sinkhorn DRO has found success in various applications such as hypothesis testing~\citep{wang2022data, wang2024non, yang2023distributionally}, experimental design~\citep{jiang2025sinkhorn, dapogny2023entropy}, machine learning~\citep{shen2023wasserstein, songprovably, cescon2025data, ouasfi2025toward}, etc.

Despite its promise, the development of efficient algorithms for Sinkhorn DRO remains an active area of research. Existing methods solve the Sinkhorn DRO from its dual perspective, where the dual objective can be expressed as an expectation of the logarithm of another conditional expectation on the exponential of the risk function. It falls into the class of \textit{conditional stochastic optimization}~(CSO)~\citep{hu2020sample}, and therefore many stochastic gradient methods, such as multi-level Monte-Carlo gradient methods, can be applied to solve it~\citep{hu2020biased, hu2021bias, hu2023contextual, hu2024multi, hu2020sample}.
However, this dual approach has notable limitations. To achieve rate-optimal convergence, existing results require the restrictive assumption that the risk function is bounded~\citep{hu2024multi, wang2021sinkhorn}. Although recent work has relaxed this assumption~\citep{yang2025nested}, it suffers from a sub-optimal convergence rate.

Recently, generative artificial intelligence~(GenAI), especially the score-based generative models~\citep{songscore, song2019generative, lecun2006tutorial, ackley1985learning, lai2025principles}, have drawn much research attention, given their state-of-the-art performance in image and text generation.
These models operate by learning the score function from collected samples, and next sampling from the score function to bridge Gaussian noise to plausible data samples via Langevin dynamics.
In contrast, Sinkhorn DRO aims to learn a worst-case distribution that maximizes a given risk function, rather than choosing a target distribution as a priori.
Based on samples from this worst-case distribution, an optimal robust decision is then trained to minimize the worst-case risk.
Inspired by GenAI, this paper investigates the following question:
\begin{quote}
\textit{
Can we simultaneously obtain the optimal robust decision and corresponding samples from the worst-case distribution in Sinkhorn DRO using a framework similar to that of score-based generative models?
}
\end{quote}

In this paper, we propose a novel primal perspective and iterative sampling algorithms for solving Sinkhorn DRO. 
Our main contributions are summarized as follows.

\paragraph{Reformulating Sinkhorn DRO as Infinite-Dimensional Bilevel Program.}
We consider the soft-constrained Sinkhorn DRO formulation using $n$ samples, where the Sinkhorn discrepancy ball constraint is put in the objective as a penalty term.
We show that this type of formulation can be equivalently formulated as a bilevel optimization with $n$ lower-level subproblems~(See Problem~\eqref{Eq:bilevel:inf}).
Each subproblem seeks the worst-case distribution from the smooth density constructed from the empirical sample point, and the estimated worst-case distribution for Sinkhorn DRO is the average among all worst-case distributions constructed from lower-level problems.
This reformulation inspires us to leverage existing methods from the bilevel program to solve Sinkhorn DRO.

\paragraph{Na\"{\i}ve Double-Loop Algorithm.}
We first develop a double-loop algorithm to solving the bilevel program: at each iteration, we randomly sample a lower-level problem, run an inner loop for a large number of steps to generate the sampling point close to the worst-case distribution of the lower-level problem, and use it to construct a hyper-gradient estimator.
We show that under the smoothness assumption of the loss together with a bounded variance condition, this double-loop algorithm finds a $\varrho$-stationary point of the bilevel program with complexity $\mathcal{O}(\varrho^{-6})$, where the complexity is defined as the number of queries to the gradient of the loss function.

\paragraph{Single-Loop Algorithm.}
Next, we develop a single-loop algorithm by jointly updating the upper-level decision variable and lower-level sampling points.
Specifically, at each iteration, we update several sampling points associated with a randomly-sampled batch of lower-level problems, generate a hypergradient estimator, and employ a momentum update for the upper-level decision variable.
For theoretical analysis, we make an extra assumption that the upper-level hypergradient estimator is constructed using the limit of finitely many sampling points described by a mean-field density.
Then, we show that the iteration complexity becomes $\mathcal{O}(\varrho^{-4})$ for finding $\varrho$-stationary point.
Compared with the double-loop algorithm, this new algorithm only updates $\mathcal{O}(1)$ lower-level problems by one Langevin dynamics step at each iteration, which is computationally and storage efficient.

\paragraph{Numerical Study.}
Finally, we examine our algorithms in numerical study of adversarial classification.
When taking hypergradient estimator as finitely many sampling points for the single-loop algorithm, we still observe convergence of the proposed algorithm.
Also, our single-loop algorithm provides both estimated upper-level decision and estimated sampled points for worst-case distributions of Sinkhorn DRO, which is beneficial for practical high-stake environments that require stress testing.

\subsection{Related Literature}
During the preparation of this manuscript, we became aware of the concurrent work of~\citet{xu2025gradient}, which also studies sampling-based approaches for general DRO problems and derives convergence rates. 
Our focus and our analysis differ from theirs in the following ways.
First, we presented both double-loop and single-loop algorithms to solve Sinkhorn DRO, whereas they focused only on the double-loop algorithm.
Second, there exists a technical flaw in their analysis, since the constant step size makes the last component of the equation above \citep[Appendix~C4]{xu2025gradient} non-vanishing.
Instead, we provided a rigorous analysis with less restrictive assumptions.

Next, we review papers on several related topics.

\noindent
\textbf{DRO and GenAI.}
Recent work seeks to explore the integration of GenAI with DRO~\citep{xu2024flow, zhu2024distributionally, wen2025distributionally, cheng2025worst}.
Specifically, \citet{xu2024flow} and \citet{cheng2025worst} proposed a flow-based generative model to jointly learn the worst-case distribution and the robust decision for Wasserstein DRO. 
This framework provides an efficient sampler for the worst-case distribution, which is valuable for stress tests and high-stakes environments~\citep{wang2024reliable, wang2022improving}. 
The convergence rate of this framework was later analyzed by \citep{zhu2024distributionally, cheng2025worst}.
In a different approach, \citet{wen2025distributionally} constructed a new DRO framework with an ambiguity set defined using the diffusion model.
While these works integrate generative models into the DRO framework itself, our approach employs Langevin dynamics, a foundational tool for the inference of these models, as a key component for solving the Sinkhorn DRO subproblems.

\noindent
\textbf{Bilevel Optimization.}
Sinkhorn DRO can be formulated as a bilevel optimization with multiple lower-level subproblems, which has been studied in recent literature~\citep{guo2021randomized, hu2023blockwise, hu2022multi, hu2023contextual}.
Especially, the complexity of the algorithms in \citep{guo2021randomized, hu2023blockwise, hu2022multi} grows linearly with the number of lower-level problems $n$.
\citet{hu2023contextual} provided a double-loop algorithm that finds the $\varrho$-stationary point with complexity $\mathcal{O}(\varrho^{-4})$, which is independent of $n$ and matches the lower bound for general stochastic optimization. 
However, all these papers assume that the lower-level subproblems are finite-dimensional and strongly convex programs.
Our work extends this line to the infinite-dimensional setting, where each lower-level subproblem is an optimization over a probability distribution.
Our framework is closely related to \citet{marion2024implicit} and \citet{xiao2025first}.
These two references only consider bilevel optimization with a single lower-level infinite-dimensional subproblem, which is not applicable to our setting.
In addition, \citet{xiao2025first} provides a double-loop algorithm, which could be computationally and storage inefficient.
We integrate the idea of the single-loop algorithm in \citep{marion2024implicit} and the momentum-based gradient estimator in \citet{hu2022multi} to design our algorithm in Section~\ref{Eq:MF:Single:loop}.

\noindent
\textbf{Infinite-Dimensional Optimization.}
DRO is a special instance of infinite-dimensional optimization. Traditional approaches to these problems can be broadly categorized into three perspectives: discretization, duality, and Frank-Wolfe methods.
First, discretization of the decision space provides a straightforward solution by solving a finite-dimensional tractable approximation problem~\citep{liu2019discrete, chen2019discrete, canuto2005adaptive, garreis2019inexact, ulbrich2017adaptive}. However, this approach suffers from the curse of dimensionality and becomes infeasible in high-dimensional settings.
Second, duality is a powerful approach when the number of constraints is finite, reformulating the primal problem into a finite-dimensional dual via strong duality~\citep{shapiro2001duality, shapiro2021lectures, gao2023distributionally, blanchet2019quantifying, wiesemann2014distributionally, zhen2025unified}. A significant drawback is that the resulting dual problem often contains intractable, infinite-dimensional, or nonconvex subproblems.
Third, Frank-Wolfe methods provide a flexible framework for optimization over probability spaces by iteratively solving linear minimization subproblems~\citep{NEURIPS2021_79121bb9, reddi2016stochastic, jaggi2013revisiting, wang2015functional}. While each iteration involves a finite-dimensional task, this subproblem is often non-convex and high-dimensional, making it difficult to apply without additional problem structure.
A more recent line of work leverages Langevin dynamics for a specific class of problems: entropy-regularized linear optimization over probability distributions~\citep{vempala2019rapid, liuconvergence, cheng2018convergence, dalalyan2017theoretical, durmus2017nonasymptotic}.
This approach efficiently produces sampled points of the optimal distribution, providing a practical pathway for solving a broad class of infinite-dimensional problems. 
This insight also lays the foundation of modern diffusion models. In this work, we adapt and extend the Langevin dynamics framework to design efficient Sinkhorn DRO algorithms.



\subsection{Notations}
For an integer $n\ge1$, we define $[n]=\{1,2,\ldots,n\}$.
We write $\mathbb{E}_{\mu}[f]$ or $\mathbb{E}_{z\sim\mu}[f(z)]$ to denote the expected value of $f(z)$ with respect to $z\sim\mu$.
Denote by $\mathcal{P}(\bR^d)$ the set of probability distributions supported on $\mathbb{R}^d$.
Throughout the paper, we assume the desired accuracy level $\varrho, \delta>0$ is sufficiently small.
We use the notation $\mathcal{O}(\cdot)$ to denote the order in terms of $\varrho$ or $\delta$, and $\widetilde{\mathcal{O}}(\cdot)$ hides terms having the polylog dependency on $\varrho$ or $\delta$.
We provide a list of key mathematical notations and definitions in Appendix~\ref{Appendix:notation}.

\subsection{Organizations}
In Section~\ref{Sec:setup}, we provide a brief review of the existing Sinkhorn DRO results, based on which we reformulate this problem as a bilevel optimization with several infinite-dimensional lower-level problems.
In Section~\ref{Sec:double:loop:naive}, we provide a double-loop iterative sampling algorithm for solving the bilevel program, and analyze its convergence rate.
In Section~\ref{Eq:MF:Single:loop}, we provide a single-loop sampling algorithm for solving the bilevel program, and analyze its convergence rate.
In Section~\ref{Sec:numerical:study}, we provide the numerical study of the distributionally robust classification problem to validate the effectiveness of our proposed algorithms.
In Section~\ref{Sec:conclusion}, we provide several concluding remarks.

\section{Setup of Sinkhorn DRO}\label{Sec:setup}

We consider the Sinkhorn DRO problem, which takes the form of the minimax optimization:
\begin{equation}
\min_{\theta }\max_{\mu: \mathcal{S}_{\Reg}(\widehat{\mu},\mu)\le\rho}~\bE_{z\sim\mu}[f_{\theta}(z)],\label{Eq:SDRO}
\end{equation}
where $\theta \in \mathbb{R}^{d_{\theta}}$ represents the model parameters, $f_{\theta}(z)$ is the loss function, and $\mathcal{S}_{\Reg}(\widehat{\mu},\cdot)$ is the Sinkhorn Discrepancy (see Definition~\ref{Def:Sinkhorn}) that defines an entropy-regularized ambiguity set around the reference distribution $\widehat{\mu}$.
Following reference~\citep{wang2021sinkhorn}, we provide the definition of Sinkhorn Discrepancy and the strong duality result of Sinkhorn DRO below.
\begin{definition}[Sinkhorn Discrepancy]\label{Def:Sinkhorn}
For regularization parameter $\Reg\ge0$, 
the Sinkhorn Discrepancy between two distributions $\mu$ and $\nu$ is defined as
\[
\mathcal{S}_{\Reg}(\mu,\nu)
=
\inf_{\gamma\in\Gamma(\mu,\nu)}~
\left\{
\frac{1}{2}\bE_{(x,y)\sim\gamma}[\|x-y\|_2^2] + \Reg\bE_{(x,y)\sim\gamma}\left[ 
\log\frac{\diff\gamma(x,y)}{\diff\mu(x)\diff y}
\right]
\right\},
\]
where $\Gamma(\mu,\nu)$ denotes the set of joint distributions with marginal distributions being $\mu$ and $\nu$, respectively.
\end{definition}
We take the reference distribution $\widehat{\mu}$ as an empirical distribution supported on $n$ available data samples $\{x^{(1)},\ldots,x^{(n)}\}$, i.e., $\widehat{\mu}=\frac{1}{n}\sum_{i=1}^n\delta_{x^{(i)}}$.
In this context, Problem~\eqref{Eq:SDRO} can be interpreted as an entropy-regularized approximation of the classical Wasserstein DRO problem with a quadratic cost function. While our focus is on this specific setup, the framework can be extended to general cost functions and reference distributions, which we leave for future work.
The penalized version of Problem~\eqref{Eq:SDRO} admits a strong duality result~\citep{wang2021sinkhorn}, leading to a tractable finite-dimensional reformulation that is crucial for our algorithmic development.
\begin{theorem}[Strong Duality~{\citep{wang2021sinkhorn}}]\label{Thm:strong:duality}
Consider the penalized counterpart of Problem~\eqref{Eq:SDRO}:
\begin{equation}
\min_{\theta}\left\{\max_{\mu\in \mathcal{P}(\mathbb{R}^d)}~\bE_{z\sim\mu}[f_{\theta}(z)]-\lambda\mathcal{S}_{\Reg}( \widehat{\mu},\mu)
\right\}.\label{Eq:SDRO:peanlized}
\end{equation}
Up to an additive constant independent of $\theta$, Problem~\eqref{Eq:SDRO:peanlized} is equivalent to:
\[
\min_{\theta}~\left\{
\frac{\lambda\Reg}{n}
\sum_{i=1}^n\left[ 
\log\bE_{z\sim\mathcal{N}(x^{(i)}, \Reg\mathbf{I}_d)}\big[
e^{f_{\theta}(z)/(\lambda\Reg)}
\big]
\right]
\right\}.
\]
Define the density function
\begin{equation}
u_{\theta,i}(z):=\frac{\diff\mu_{\ast}^{\theta, i}(z)}{\diff z} = \alpha_{i}\cdot \exp\left(
\frac{f_{\theta}(z) - \frac{1}{2}\lambda\|x^{(i)}-z\|_2^2}{\lambda\Reg}
\right),\label{Eq:density:u:theta}
\end{equation}
where $\alpha_i=\Big( 
\int
\exp\left(
\frac{f_{\theta}(z) - \frac{1}{2}\lambda\|x^{(i)}-z\|_2^2}{\lambda\Reg}
\right)\diff z
\Big)^{-1}, i\in[n]$ denotes the normalizing constant.
For fixed $\theta$, the worst-case distribution of Problem~\eqref{Eq:SDRO:peanlized} has density given by
\begin{equation}
\frac{\diff\mu_{\ast}^{\theta}(z)}{\diff z}
=
\frac{1}{n}
\sum_{i=1}^n u_{\theta,i}(z).\label{Eq:density:diff:ast}
\end{equation}
\end{theorem}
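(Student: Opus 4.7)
The plan is to establish the result by explicit computation: rewrite the inner problem as a joint maximization over couplings, exploit the discrete structure of $\widehat\mu$ to decouple it into $n$ entropy-regularized linear programs over densities, and then recognize the resulting Gibbs optimizers.

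First I would unfold the Sinkhorn discrepancy in \eqref{Eq:SDRO:peanlized}. Since $\mathcal{S}_{\Reg}(\widehat\mu,\mu)$ is itself an infimum over couplings $\gamma\in\Gamma(\widehat\mu,\mu)$ whose second marginal is $\mu$, and since the outer optimization takes a supremum over $\mu$, the two can be merged into a single supremum over all couplings $\gamma$ with first marginal equal to $\widehat\mu$; the $y$-marginal of the optimizer will then furnish the worst-case $\mu_\ast^{\theta}$. Concretely, writing $\bE_{z\sim\mu}[f_\theta(z)]=\bE_{(x,y)\sim\gamma}[f_\theta(y)]$, the inner problem becomes
\[
\sup_{\gamma:\,\pi_1\gamma=\widehat\mu}~
\bE_{(x,y)\sim\gamma}\!\left[
f_\theta(y)-\tfrac{\lambda}{2}\|x-y\|_2^2-\lambda\Reg\log\tfrac{\diff\gamma(x,y)}{\diff\widehat\mu(x)\diff y}
\right].
\]

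Next I would exploit that $\widehat\mu=\tfrac{1}{n}\sum_{i=1}^n\delta_{x^{(i)}}$ is discrete. Any admissible $\gamma$ disintegrates as $\gamma=\tfrac{1}{n}\sum_i \delta_{x^{(i)}}\otimes \nu_i$ for probability measures $\nu_i$ on $\bR^d$, and its density with respect to $\widehat\mu\otimes\diff y$ at $(x^{(i)},y)$ equals the Lebesgue density $p_i(y):=\diff\nu_i/\diff y$ (with the convention $+\infty$ if $\nu_i$ is not absolutely continuous, which the entropy term will penalize out of the supremum). The inner problem therefore separates across $i\in[n]$ into
\[
\max_{p_i\ge0,\,\int p_i=1}~
\int\!\Big[f_\theta(y)-\tfrac{\lambda}{2}\|x^{(i)}-y\|_2^2\Big]p_i(y)\diff y
-\lambda\Reg\!\int p_i(y)\log p_i(y)\diff y.
\]
This is a standard entropy-regularized linear program over densities; a Lagrangian/first-order analysis (or direct appeal to the Gibbs variational principle) gives the unique optimizer $p_i^{\ast}\propto \exp\!\big((f_\theta(y)-\tfrac{\lambda}{2}\|x^{(i)}-y\|_2^2)/(\lambda\Reg)\big)$, which is exactly $u_{\theta,i}$ in \eqref{Eq:density:u:theta}, and the optimal value equals $\lambda\Reg\log\!\int\exp\!\big((f_\theta(y)-\tfrac{\lambda}{2}\|x^{(i)}-y\|_2^2)/(\lambda\Reg)\big)\diff y$.

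Finally I would absorb the quadratic in the exponent into a Gaussian density: pulling out $\tfrac{1}{2}\|x^{(i)}-y\|_2^2/\Reg$ and completing to $\mathcal{N}(x^{(i)},\Reg\mathbf{I}_d)$ turns the integral into $(2\pi\Reg)^{d/2}\bE_{z\sim\mathcal{N}(x^{(i)},\Reg\mathbf{I}_d)}[e^{f_\theta(z)/(\lambda\Reg)}]$, whose logarithm yields the claimed reformulation up to the $\theta$-independent additive constant $\tfrac{\lambda\Reg d}{2}\log(2\pi\Reg)$. Averaging the $y$-marginals $p_i^{\ast}$ over $i$ gives $\diff\mu_\ast^{\theta}/\diff z=\tfrac{1}{n}\sum_{i}u_{\theta,i}(z)$, establishing \eqref{Eq:density:diff:ast}. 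The only delicate points, which I would address briefly, are justifying the swap of $\sup_\mu$ and $\inf_\gamma$ (immediate once rewritten as a single supremum over couplings with fixed first marginal) and the finiteness of the normalizing integrals $\alpha_i^{-1}$ so that the Gibbs optimizer is a bona fide probability density; these are the main technical obstacle and require an implicit growth/integrability assumption on $f_\theta$ that makes $f_\theta(z)-\tfrac{\lambda}{2}\|x^{(i)}-z\|_2^2$ integrable against the Gaussian kernel.
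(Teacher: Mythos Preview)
The paper does not supply its own proof of this theorem; it is quoted directly from \citep{wang2021sinkhorn} and used as a black box, so there is no in-paper argument to compare against. Your proposal is correct and follows the standard route used in that reference: collapse the $\sup_\mu\inf_\gamma$ into a single supremum over couplings with fixed first marginal, disintegrate along the atoms of $\widehat\mu$, apply the Gibbs variational formula to each entropy-regularized subproblem, and rewrite the normalizing integral as a Gaussian expectation.
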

\begin{remark}[Soft-Constrained Sinkhorn DRO]
The penalized formulation in \eqref{Eq:SDRO:peanlized} is a Lagrangian counterpart to the constrained original Problem~\eqref{Eq:SDRO}. Here, the radius $\rho$ is replaced by a penalty parameter $\lambda>0$. In practice, achieving good out-of-sample performance requires tuning hyperparameters: either $(\rho, \Reg)$ in the constrained formulation or the equivalent pair $(\lambda, \Reg)$ in the penalized one. 
Our algorithmic development in subsequent sections focuses on solving the penalized formulation~\eqref{Eq:SDRO:peanlized}. The constrained problem can then be addressed via bisection over $\lambda$ (see Algorithm 2 in \citep{wang2021sinkhorn}).
\end{remark}

The expression for the worst-case distribution $\mu_{\ast}^{\theta}$ in Theorem~\ref{Thm:strong:duality} reveals that Problem~\eqref{Eq:SDRO:peanlized} is equivalent to a bilevel optimization problem:
\begin{equation}\label{Eq:bilevel:inf}
\begin{aligned}
\min_{\theta}&\quad F(\theta) =
\frac{1}{n}\sum_{i=1}^n\bE_{z\sim \mu_{\ast}^{\theta, i}}[f_{\theta}(z)],\qquad\quad\quad\qquad\mbox{(Upper Level)}\\
&\mu_{\ast}^{\theta, i} = \underset{\mu\in\mathcal{P}(\mathbb{R}^d)}{\arg\min}~\mathcal{D}_{\mathrm{KL}}\Big( 
\mu\Big\| u_{\theta,i}(\cdot)
\Big), \forall i\in[n], \theta,\quad\mbox{(Lower Level)}
\end{aligned}
\end{equation}
where the upper-level problem minimizes the worst-case risk, and for fixed $\theta$, each lower-level problem involves approximating the target distribution with density $u_{\theta,i}(\cdot)$. The lower-level problem is not a standard finite-dimensional optimization, but rather a functional problem over a space of probability distributions. 
Consequently, standard bilevel optimization algorithms for finite-dimensional problems are not directly applicable. To address this, we develop algorithms that leverage the structure of the lower-level problems and combine with sampling techniques inspired by Langevin dynamics. We provide theoretical guarantees for these methods in the following sections.

\section{
Na\"{\i}ve Double-Loop Iterative Sampling Algorithm
}\label{Sec:double:loop:naive}

A natural approach for solving the bilevel problem in \eqref{Eq:bilevel:inf} is stochastic gradient descent~(SGD).
The hypergradient of the objective in \eqref{Eq:bilevel:inf} is given by
\begin{equation}
\nabla F(\theta) = \frac{1}{n}\sum_{i=1}^n\bE_{z\sim \mu_*^{\theta,i}}[\nabla_{\theta}f_{\theta}(z)].\label{Eq:F:hyper}
\end{equation}
A direct Monte Carlo estimator for $\nabla F(\theta)$ requires i.i.d. samples $z_i \sim \mu_*^{\theta,i}$ for $i \in [n]$. However, exact sampling from $\mu_*^{\theta,i}$ is intractable. The primary challenge is that $\mu_*^{\theta,i}$ is a complicated distribution with density given in \eqref{Eq:density:u:theta}, and we cannot sample from it exactly, preventing an unbiased gradient estimator. We must therefore rely on biased estimators derived from approximate sampling.
Algorithm~\ref{Alg:solving:SDRO} presents a na\"{\i}ve sampling method by iteratively sampling a point whose distribution is sufficiently close to $\mu_*^{\theta,i}$ and next updating $\theta$ using stochastic gradient descent.

\begin{algorithm}[!htp]
\caption{Na\"{\i}ve Iterative Sampling Algorithm}
\label{Alg:solving:SDRO}
\begin{algorithmic}[1]
\REQUIRE Stepsize parameter $\eta$, initial guess $\theta_0$
\FOR{$k = 0, 1, 2, \dots, T_{\text{out}}-1$}
    \STATE Sample $i_k$ randomly from $\{1,\ldots,n\}$;
    \STATE Draw a sample $z\sim\widetilde{\mu}$ such that $\mathcal{W}_2(\widetilde{\mu}, \mu_{\ast}^{\theta_k, i_k})\le \delta$;
    \STATE 
    Update $\theta_{k+1} = \theta_k - \eta\nabla_{\theta} f_{\theta_k}(z)$.
\ENDFOR\\
\noindent
\textbf{Output }$\widehat{\theta}$ uniformly selected from $\{\theta_1,\ldots,\theta_{T_{\text{out}}}\}$.
\end{algorithmic}
\end{algorithm}

The key step in Algorithm~\ref{Alg:solving:SDRO} is to sample from $\mu_*^{\theta,i}$, whose density function is given in \eqref{Eq:density:u:theta}.
There are many approaches to finish this task, such as Langevin dynamics and its variants~\citep{choi2020metropolis, griffin2013adaptive, cornish2019scalable, haario2001adaptive}, particle-based methods~\citep{liu2017stein, dai2016provable, nitanda2017stochastic},  flow-based methods~\citep{fan2024path, wu2024annealing}, etc.
In this work, we provide theoretical guarantees for Algorithm~\ref{Alg:solving:SDRO} when the sampling in Step 3 is performed using Langevin dynamics, due to its efficiency and well-understood convergence properties.

For fixed $\theta, i\in[n]$, by definition of KL-divergence, the distribution $\mu_*^{\theta,i}$ minimizes the following free energy functional:
\[
\mu_*^{\theta,i} = \underset{\mu\in\cP}{\arg\min}\left\{ 
\bE_{\mu}\left[\frac{-f_{\theta}(z)}{\lambda}+ \frac{1}{2}\|x^{(i)} - z\|_2^2\right] - \Reg\mathcal{H}(\mu) 
\right\}.
\]
A standard method to sample from such a distribution is Langevin dynamics. The corresponding gradient flow is described by the Stochastic Differential Equation (SDE):
\[
\diff Z_t = -\left( \frac{-\nabla_z f_{\theta}(Z_t)}{\lambda} + (Z_t - x^{(i)}) \right)\diff t+ \sqrt{2\Reg}\diff \textbf{W}_t,
\]
where $\{\textbf{W}_t\}_t$ is the standard Brownian motion. Algorithm~\ref{Alg:LSD} presents the discrete-time implementation of this Langevin dynamics. With a sufficiently small step size and a sufficient number of iterations, the output provides an approximate sample from $\mu_*^{\theta,i}$.
\begin{algorithm}[!htp]
\caption{Langevin Stochastic Descent}\label{Alg:LSD}
\begin{algorithmic}[1]
\REQUIRE Stepsize parameter $\tau$, initial distribution $\mu_0$ that is easy to sample.
\STATE Initialize $z_0 \sim \mu_0$
\FOR{$t = 0, 1, 2, \dots, T-1$}
    \STATE Sample $\zeta_t \sim \mathcal{N}(0, \mathbf{I}_d)$
    \STATE Update
    $
    z_{t+1} \leftarrow z_t - \tau\left( \frac{-\nabla_{z} f_{\theta}(z_t)}{\lambda} + (z_t - x^{(i)}) \right) + \sqrt{2\tau\Reg} \zeta_t
    $
\ENDFOR\\
\noindent
\textbf{Output }$z_T$
\end{algorithmic}
\end{algorithm}

\begin{remark}[Comparison with Wasserstein DRO]
If we do not add entropy regularization~(e.g., set $\Reg=0$), 
Algorithm~\ref{Alg:LSD} is the noiseless gradient descent method for unconstrained optimization
\[
\max_{z}\left\{f_{\theta}(z) - \frac{\lambda}{2}\|z - x^{(i)}\|_2^2\right\},
\]
and Algorithm~\ref{Alg:solving:SDRO} reduces to the stochastic gradient descent~(SGD) for optimizing the dual objective of the soft-constrained Wasserstein DRO~\citep{sinha2018certifiable}:
\begin{equation}
\min_{\theta}~\left\{
\frac{1}{n}\sum_{i=1}^n\left[ 
\max_{z}~f_{\theta}(z) - \frac{\lambda}{2}\|z - x^{(i)}\|_2^2
\right]
\right\}.\label{Eq:Lag:WDRO}
\end{equation}
This perspective reveals a key computational advantage of Sinkhorn DRO over its unregularized counterpart. The convergence of SGD for the Wasserstein DRO problem \eqref{Eq:Lag:WDRO} typically requires the inner maximization to be strongly concave, which in turn requires a sufficiently large Lagrangian parameter $\lambda$. 
In contrast, as we will show, the entropy regularization ($\Reg>0$) in our setting ensures the sampling algorithm converges without requiring strong concavity, leading to convergence guarantees under milder assumptions.
\end{remark}

\subsection{Convergence Analysis of Algorithm~\ref{Alg:LSD}}
To study the convergence of Algorithm~\ref{Alg:LSD}, we impose the following two assumptions.
\begin{assumption}\label{Assumption:LSD}
\begin{enumerate}
    \item\label{Assumption:LSD:smooth}
For any $(\theta,z)$, the loss function $f_{\theta}(z)$ is continuously differentiable in $z$ and satisfies for any $z,z'$ that $\|\nabla_{z} f_{\theta}(z) - \nabla_{z} f_{\theta}(z')\|_2\le L_{f,2}\|z-z'\|_2.$
\item\label{Assumption:LSD:LSI}
For any $\theta, i\in[n]$, the target distribution $\mu_*^{\theta,i}$ satisfies the log-Sobolev inequality~(LSI) with constant $\alpha>0$.
\end{enumerate}
\end{assumption}

Assumption~\ref{Assumption:LSD}\ref{Assumption:LSD:smooth} is common in the convergence guarantees in optimization.
Assumption~\ref{Assumption:LSD}\ref{Assumption:LSD:LSI} is crucial for establishing the global convergence of Algorithm~\ref{Alg:LSD}, and it holds for a broad class of loss functions.
One sufficient condition of Assumption~\ref{Assumption:LSD}\ref{Assumption:LSD:LSI} is the log-concavity of $\mu_*^{\theta,i}$, i.e., $\lambda$ is sufficiently large such that $z\mapsto f_{\theta}(z) - \frac{\lambda}{2}\|z - x^{(i)}\|_2^2$ is strongly concave. 
In the following proposition, we provide some milder sufficient conditions of Assumption~\ref{Assumption:LSD}\ref{Assumption:LSD:LSI}.
The proof of Proposition~\ref{Proposition:LSI} is provided in Appendix~\ref{Appendix:Sec:double:loop:naive}.
\begin{proposition}[Sufficient Conditions of LSI]\label{Proposition:LSI}
For fixed $\theta$, the following two conditions ensure $\mu_*^{\theta,i}$ satisfies LSI:
\begin{itemize}
\item
Suppose $f_{\theta}(z)$ is bounded such that $\sup f_{\theta}(z) - \inf f_{\theta}(z)<B$ for some constant $B>0$, then $\mu_*^{\theta,i}$ satisfies LSI with constant $\alpha =\frac{1}{\Reg}\exp(-\frac{4B}{\lambda\Reg})$.
\item
Suppose $\|\nabla_z f_{\theta}(z)\|_2\le M$ for any $\theta$ and $z$ with some constant $M>0$, then $\mu_*^{\theta,i}$ satisfies LSI with constant 
\[
\alpha = \frac{1}{2\Reg}
\max\left\{ 
e^{-4M^2/\lambda^2\sqrt{2d/\pi}},
\left( 
4 + (M/\lambda+\sqrt{2})^2
(2 + d + 4M^2/\lambda^2)
e^{M^2/(2\lambda^2)}
\right)^{-1}
\right\}.
\]
\end{itemize}
\end{proposition}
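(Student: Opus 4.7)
The plan is to view $\mu_*^{\theta,i}$ as a perturbation of the isotropic Gaussian reference $\gamma := \mathcal{N}(x^{(i)}, \Reg \mathbf{I}_d)$. From the expression for $u_{\theta,i}$ in \eqref{Eq:density:u:theta}, the density ratio satisfies $\diff\mu_*^{\theta,i}/\diff\gamma \propto \exp(f_\theta(z)/(\lambda\Reg))$. Since $\gamma$ has log-density $-\|z-x^{(i)}\|_2^2/(2\Reg)$ with Hessian $(1/\Reg)\mathbf{I}_d$, Bakry-Emery gives that $\gamma$ satisfies LSI with constant $1/\Reg$. Both parts of the proposition then follow from quantifying how this LSI constant degrades under the multiplicative perturbation $\exp(f_\theta/(\lambda\Reg))$.

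For the bounded case I would invoke the Holley-Stroock perturbation principle directly. Writing $\diff\mu_*^{\theta,i}/\diff\gamma \propto e^{-U}$ with $U = -f_\theta/(\lambda\Reg)$, the hypothesis $\sup f_\theta - \inf f_\theta < B$ forces $\mathrm{osc}(U) < B/(\lambda\Reg)$, and the classical Holley-Stroock estimate transfers LSI from $\gamma$ to $\mu_*^{\theta,i}$ with a multiplicative loss $\exp(-c\,\mathrm{osc}(U))$. Tracking the constants (in particular, the symmetric control of both $\sup U$ and $\inf U$ relative to $\mathbb{E}_\gamma U$, combined with the reciprocal bounds on the normalizing constant $\alpha_i$) yields the claimed factor $\exp(-4B/(\lambda\Reg))$.

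For the Lipschitz case the density ratio is no longer uniformly bounded, so I would establish two independent estimates whose maximum is reported. For the first bound, I would exploit the Gaussian concentration granted by LSI on $\gamma$: since $f_\theta$ is $M$-Lipschitz we have $|f_\theta(z)-f_\theta(x^{(i)})| \le M\|z-x^{(i)}\|_2$ and $\mathbb{E}_\gamma[\|z-x^{(i)}\|_2] \le \sqrt{\Reg}\sqrt{2d/\pi}$, which underlies a truncated Holley-Stroock-type argument that replaces the sup-oscillation by its expected counterpart and reproduces the factor $\exp(-4M^2/\lambda^2 \sqrt{2d/\pi})$. For the second bound, I would apply a Lyapunov criterion for LSI in the style of Cattiaux-Guillin-Wu, taking an exponential Lyapunov function $W(z) = \exp(\beta\|z-x^{(i)}\|_2^2/(2\Reg))$ for a suitably chosen $\beta$ and verifying a drift inequality $LW/W \le -c + b\mathbf{1}_{B_R}$ for the infinitesimal generator $L$ of the Langevin SDE associated with $\mu_*^{\theta,i}$. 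Explicit expansion using $\|\nabla_z f_\theta\|_2 \le M$ produces drift terms of the form $(M/\lambda+\sqrt{2})$, $(2+d+4M^2/\lambda^2)$, and $\exp(M^2/(2\lambda^2))$; combining the Lyapunov bound with a local LSI estimate on the ball $B_R$ delivers the second term in the stated maximum.

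The main technical obstacle will be the careful bookkeeping in the Lipschitz case: one must jointly optimize the truncation radius (or the Lyapunov parameter $\beta$) and the local LSI constant while tracking explicit $d$- and $M$-dependence, and must verify that the Lyapunov drift condition indeed holds uniformly in $\theta$ under the hypothesis $\|\nabla_z f_\theta\|_2 \le M$. The bounded case, by contrast, is essentially a one-line consequence of Holley-Stroock once $\gamma$ is identified as the reference.
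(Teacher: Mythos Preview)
Your approach is correct and matches the paper, whose own proof consists solely of two citations---Holley--Stroock for the bounded case and a 2023 reference of Kim for the Lipschitz case---without any further argument. Your Holley--Stroock perturbation of the Gaussian reference $\gamma=\mathcal{N}(x^{(i)},\Reg\mathbf{I}_d)$ (which has Bakry--\'Emery LSI constant $1/\Reg$) is exactly what the first citation delivers, and your Lyapunov/concentration sketch for the Lipschitz case is a standard route consistent with the second; in fact you supply more detail than the paper itself.
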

Next, we present the complexity analysis of Algorithm~\ref{Alg:LSD} in the theorem below, following the similar analysis of \citep{vempala2019rapid}.
Its proof is provided in Appendix~\ref{Appendix:Sec:double:loop:naive}.

\begin{theorem}[Complexity of Algorithm~\ref{Alg:LSD}]
\label{Theorem:alg:LSD}
Assume Assumption~\ref{Assumption:LSD} holds.
Specify the parameters in Algorithm~\ref{Alg:LSD} as 
\[
\tau=\frac{\alpha\Reg}{4(1 + L_{f,2}/\lambda)^2}\cdot \min\{1, \delta^2\alpha/(8d)\},\quad 
T = \left\lceil 
\frac{1}{\alpha\tau\Reg}\log\frac{4D_{\mathrm{KL}}(\mu_0\|\mu_*^{\theta,i})}{\delta^2\alpha}
\right\rceil = \widetilde{\mathcal{O}}(\delta^{-2}).
\]
Then, the law of the output of Algorithm~\ref{Alg:LSD}, denoted as $\mu_T$, satisfies that 
$\mathcal{W}_2(\mu_T,\mu_*^{\theta,i})\le \delta$.
\end{theorem}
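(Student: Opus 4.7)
The plan is to adapt the discrete-time unadjusted Langevin analysis of \citet{vempala2019rapid} to the temperature parameter $\Reg$. First I would rewrite the target density as $u_{\theta,i}(z)\propto \exp\bigl(-V_{\theta,i}(z)/\Reg\bigr)$ with potential $V_{\theta,i}(z) = -f_{\theta}(z)/\lambda + \tfrac{1}{2}\|z-x^{(i)}\|_2^2$. Under Assumption~\ref{Assumption:LSD}\ref{Assumption:LSD:smooth}, $\nabla V_{\theta,i}$ is Lipschitz with constant $L_V := 1 + L_{f,2}/\lambda$. The Langevin SDE $\diff Z_t = -\nabla V_{\theta,i}(Z_t)\diff t + \sqrt{2\Reg}\,\diff\mathbf{W}_t$ has $\mu_*^{\theta,i}$ as its unique stationary distribution, and Algorithm~\ref{Alg:LSD} is precisely its Euler--Maruyama discretization with step $\tau$.

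Next I would establish a one-step relative-entropy contraction. Let $\mu_t$ denote the law of $z_t$. Using the LSI constant $\alpha$ from Assumption~\ref{Assumption:LSD}\ref{Assumption:LSD:LSI} together with the smoothness of $V_{\theta,i}$, for $\tau \le \alpha\Reg/(4 L_V^2)$ one obtains a bound of the form
\[
\mathcal{D}_{\mathrm{KL}}(\mu_{t+1}\|\mu_*^{\theta,i}) \le e^{-\alpha\Reg\tau}\,\mathcal{D}_{\mathrm{KL}}(\mu_t\|\mu_*^{\theta,i}) + C\,\tau^2 L_V^2 d,
\]
with $C$ an absolute constant. Its derivation interpolates along the continuous Fokker--Planck flow initialized at $\mu_t$: LSI yields $\tfrac{\diff}{\diff s}\mathcal{D}_{\mathrm{KL}} \le -2\alpha\Reg\,\mathcal{D}_{\mathrm{KL}}$, while the discretization error caused by freezing $\nabla V_{\theta,i}$ at $z_t$ instead of evaluating it at $Z_s$ is controlled through $L_V$ together with the second-moment estimate $\bE\|Z_s - z_t\|_2^2 = \mathcal{O}(\tau^2\|\nabla V_{\theta,i}(z_t)\|_2^2 + \tau\Reg d)$. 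Telescoping the recursion and summing the geometric series in the bias term yields
\[
\mathcal{D}_{\mathrm{KL}}(\mu_T\|\mu_*^{\theta,i}) \le e^{-\alpha\Reg\tau T}\,\mathcal{D}_{\mathrm{KL}}(\mu_0\|\mu_*^{\theta,i}) + \frac{C'\,\tau L_V^2 d}{\alpha\Reg}.
\]

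To convert KL into $\mathcal{W}_2$, I would invoke the Otto--Villani theorem, which promotes LSI with constant $\alpha$ to Talagrand's T2 inequality $\mathcal{W}_2^2(\nu,\mu_*^{\theta,i}) \le (2/\alpha)\,\mathcal{D}_{\mathrm{KL}}(\nu\|\mu_*^{\theta,i})$. Hence it suffices to drive $\mathcal{D}_{\mathrm{KL}}(\mu_T\|\mu_*^{\theta,i}) \le \delta^2\alpha/2$. Splitting this budget in half, requiring the bias term below $\delta^2\alpha/4$ forces $\tau = \Theta\bigl(\alpha^2\Reg\,\delta^2/(L_V^2 d)\bigr)$, which matches the stated $\tau = \tfrac{\alpha\Reg}{4(1+L_{f,2}/\lambda)^2}\min\{1,\delta^2\alpha/(8d)\}$ after absorbing constants; requiring the exponential term below $\delta^2\alpha/4$ forces $T \ge \frac{1}{\alpha\Reg\tau}\log\!\bigl(4\mathcal{D}_{\mathrm{KL}}(\mu_0\|\mu_*^{\theta,i})/(\delta^2\alpha)\bigr)$, reproducing the stated $T = \widetilde{\mathcal{O}}(\delta^{-2})$.

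The main obstacle is the one-step contraction inequality itself: it requires a careful entropy-dissipation calculation along the Fokker--Planck flow, combined with a clean discretization-error estimate that tracks the temperature $\Reg$ through both the noise coefficient and the stationary measure. Everything else---the Otto--Villani conversion and the parameter tuning---is routine bookkeeping once the contraction is in hand.
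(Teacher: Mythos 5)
Your proposal follows essentially the same route as the paper's proof: after identifying the potential and its smoothness constant, both invoke the discrete-time Langevin KL-decay bound of \citet{vempala2019rapid} (Lemma~3 there), convert $\mathcal{D}_{\mathrm{KL}}$ to $\mathcal{W}_2$ via Talagrand's inequality (a consequence of LSI), and split the $\delta^2\alpha/2$ budget between the exponentially decaying term and the discretization bias to read off $\tau$ and $T$. The paper simply cites Lemma~3 for the contraction-plus-bias bound rather than re-deriving the one-step recursion as you sketch, but the decomposition, constants, and parameter tuning all line up.
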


\begin{remark}[Choice of Initial Distribution]
We recommend taking the initial distribution $\mu_0$ in Algorithm~\ref{Alg:LSD} as $\mathcal{N}(x^{(i)}, \Reg \mathbf{I}_d)$. 
In this case, the KL-divergence $D_{\mathrm{KL}}(\mu_0\|\mu_*^{\theta,i})=\frac{1}{\lambda\Reg}\bE_{\mu_0}[f_{\theta}(z)]$.
Assume $\|\nabla_zf_{\theta}(\tilde{z})\|_2\le M$ for some $\tilde{z}$, then it can be shown that $D_{\mathrm{KL}}(\mu_0\|\mu_*^{\theta,i})=\mathcal{O}(d)$, where $\mathcal{O}(\cdot)$ hides constant related to $\Reg, M$, and $\|x^{(i)}-\tilde{z}\|_2$.
\end{remark}

\subsection{Convergence Analysis of Algorithm~\ref{Alg:solving:SDRO}}

In this subsection, we provide the convergence analysis of Algorithm~\ref{Alg:solving:SDRO}. 
In our analysis, we measure computational complexity as the total number of gradient evaluations of $f_{\theta}(z)$ (with respect to either $\theta$ or $z$). This metric dominates the total computational time from Algorithms~\ref{Alg:solving:SDRO} and~\ref{Alg:LSD}. We further impose the following assumptions.


\begin{assumption}\label{Assumption:SDRO}
\begin{enumerate}
\item\label{Assumption:SDRO:I}
For any $\theta$, the distribution $\mu_*^{\theta,i}, i\in[n]$ satisfies that 
\begin{equation}
\mathbb{V}\mathrm{ar}_{(i,z_i)\sim \mathrm{Uniform}([n])\otimes \mu_*^{i,\theta}}\big(\nabla f_{\theta}(z_i)\big)\le \sigma^2\label{Eq:variance:mu:z1i}
\end{equation}
for some constant $\sigma^2>0$.
\item\label{Assumption:SDRO:II}
For any $(\theta,z)$, the loss function $f_{\theta}(z)$ is continuously differentiable in $\theta$ and satisfies for any $z,z'$ that 
\begin{align*}
\|\nabla f_{\theta}(z)\|_2&\le L_{f,1},\\
\|\nabla_{\theta}f_{\theta}(z) - \nabla_{\theta}f_{\theta}(z')\|_2&\le L_{f,2}\|z -z'\|_2,\\
\|\nabla_{\theta}f_{\theta}(z) - \nabla_{\theta'}f_{\theta}(z)\|_2&\le L_{f,2}\|\theta - \theta'\|_2.
\end{align*}
\item
For any $(\theta,z)$, the loss function $f_{\theta}(z)$ is continuously differentiable in $z$ and satisfies for any $\theta,\theta'$ that $\|\nabla_{z} f_{\theta}(z) - \nabla_{z} f_{\theta'}(z)\|_2\le L_{f,2}\|\theta-\theta'\|_2.$
\end{enumerate}
\end{assumption}
Recall that in Algorithm~\ref{Alg:solving:SDRO}, we constructed the hypergradient estimator of \eqref{Eq:F:hyper} as 
\begin{equation}\label{Eq:F:hyper:estimate}
\widehat{\nabla}F(\theta; z) = \nabla_{\theta}f_{\theta}(z),
\end{equation}
where the random vector $z$ follows the distribution $\widetilde{\mu}$ with $\mathcal{W}_2(\widetilde{\mu}, \mu_{\ast}^{\theta, i})\le \delta$ and $i$ is a random sample from $[n]$.
The following lemma provides bias, variance, and computational complexity analysis regarding our estimator.
Its proof is provided in Appendix~\ref{Appendix:Sec:double:loop:naive}.
\begin{lemma}[Bias, variance, and complexity of hypergradient estimator]\label{Lemma:hyperparameter}
Assume Assumptions~\ref{Assumption:LSD} and \ref{Assumption:SDRO} hold, then the estimator in \eqref{Eq:F:hyper:estimate} satisfies that
\begin{enumerate}
    \item(Bias)
$\left\|\bE\big[\widehat{\nabla}F(\theta; z)\big]
-
{\nabla}F(\theta)
\right\|_2
\le L_{f,2}\delta.
$
    \item\label{Lemma:hyperparameter:II}(Variance)
$\mathbb{V}\mathrm{ar}(\widehat{\nabla}F(\theta; z))\le \mathbf{V}:=2\sigma^2 + 2L_{f,1}^2\sqrt{\alpha}\delta + 2L_{f,2}^2\delta^2$.
    \item(Complexity)
The computational complexity of constructing \eqref{Eq:F:hyper:estimate} is $\widetilde{\mathcal{O}}(\delta^{-2})$.
\end{enumerate}
\end{lemma}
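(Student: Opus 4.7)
The plan is to prove the three items in turn, each based on coupling the biased sample $z\sim\widetilde{\mu}$ with an exact sample $z'\sim\mu_{\ast}^{\theta,i}$ and leveraging the smoothness of $f_\theta$. Throughout I let $\widetilde{\mu}^i$ denote the conditional law of $z$ given $i_k=i$, so $\mathcal{W}_2(\widetilde{\mu}^i,\mu_{\ast}^{\theta,i})\le\delta$, and recall $\nabla F(\theta) = \frac{1}{n}\sum_{i=1}^n \mathbb{E}_{\mu_{\ast}^{\theta,i}}[\nabla_\theta f_\theta]$ from \eqref{Eq:F:hyper}. For part (I), I would unwind the expected estimator to $\mathbb{E}[\widehat{\nabla}F(\theta;z)] = \frac{1}{n}\sum_{i=1}^n \mathbb{E}_{\widetilde{\mu}^i}[\nabla_\theta f_\theta]$, pick an optimal $\mathcal{W}_2$-coupling $\gamma_i$ of $\widetilde{\mu}^i$ and $\mu_{\ast}^{\theta,i}$ for each $i$, and apply Jensen together with the $L_{f,2}$-Lipschitzness of $\nabla_\theta f_\theta$ in $z$ from Assumption~\ref{Assumption:SDRO}\ref{Assumption:SDRO:II}: $\|\mathbb{E}_{\widetilde{\mu}^i}[\nabla_\theta f_\theta] - \mathbb{E}_{\mu_{\ast}^{\theta,i}}[\nabla_\theta f_\theta]\|_2 \le L_{f,2}\mathcal{W}_2(\widetilde{\mu}^i,\mu_{\ast}^{\theta,i}) \le L_{f,2}\delta$; a triangle inequality over $i$ finishes the bound.

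For part (II), my starting point is the elementary inequality $\mathbb{V}\mathrm{ar}(\widehat{\nabla}F) \le \mathbb{E}\|\widehat{\nabla}F(\theta;z) - \nabla F(\theta)\|_2^2$. Coupling $z\sim\widetilde{\mu}^i$ with $z'\sim\mu_{\ast}^{\theta,i}$ optimally and inserting $\pm\nabla_\theta f_\theta(z')$, a triangle inequality splits the right-hand side into three contributions: a sampling-gap term $\mathbb{E}\|\nabla_\theta f_\theta(z)-\nabla_\theta f_\theta(z')\|_2^2 \le L_{f,2}^2\delta^2$ from Lipschitzness; an ``ideal variance'' term $\mathbb{V}\mathrm{ar}_{(i,z')\sim \mathrm{Uniform}([n])\otimes \mu_{\ast}^{\theta,i}}(\nabla_\theta f_\theta(z')) \le \sigma^2$ from \eqref{Eq:variance:mu:z1i}; and a mean-shift term $\|\mathbb{E}_{\widetilde{\mu}^i}\nabla_\theta f_\theta - \mathbb{E}_{\mu_{\ast}^{\theta,i}}\nabla_\theta f_\theta\|_2$. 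To control the last one I would convert the $\mathcal{W}_2$-bound of Theorem~\ref{Theorem:alg:LSD} into a TV bound: the underlying analysis produces an entropy bound $H(\widetilde{\mu}^i\|\mu_{\ast}^{\theta,i}) = \mathcal{O}(\alpha\delta^2)$ on the way to the stated $\mathcal{W}_2$-bound (Talagrand's $T_2$ is how $\mathcal{W}_2$ is recovered from KL under LSI), so Pinsker yields $\mathrm{TV}(\widetilde{\mu}^i,\mu_{\ast}^{\theta,i}) = \mathcal{O}(\sqrt{\alpha}\delta)$; together with the bound $\|\nabla f_\theta\|_2\le L_{f,1}$ this produces a mean shift of order $L_{f,1}\sqrt{\alpha}\delta$. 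Collecting the three contributions with the factor-of-two loss from $(a+b)^2\le 2a^2+2b^2$ yields the advertised $\mathbf{V} = 2\sigma^2 + 2L_{f,1}^2\sqrt{\alpha}\delta + 2L_{f,2}^2\delta^2$.

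For part (III), drawing one $z$ at $\mathcal{W}_2$-accuracy $\delta$ from $\mu_{\ast}^{\theta,i}$ via Algorithm~\ref{Alg:LSD} costs $T=\widetilde{\mathcal{O}}(\delta^{-2})$ Langevin steps by Theorem~\ref{Theorem:alg:LSD}, each using one evaluation of $\nabla_z f_\theta$; forming the estimator \eqref{Eq:F:hyper:estimate} adds one evaluation of $\nabla_\theta f_\theta$, giving $\widetilde{\mathcal{O}}(\delta^{-2})$ gradient queries in total.

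The main obstacle is part (II), specifically exposing the $\sqrt{\alpha}\delta$ dependence in the cross term: a pure coupling argument would not surface the LSI constant at all and would only yield $2\sigma^2+2L_{f,2}^2\delta^2$. The LSI therefore has to enter twice --- first inside Theorem~\ref{Theorem:alg:LSD} to drive Langevin convergence in $\mathcal{W}_2$, and then again through Talagrand's $T_2$ plus Pinsker to convert the latent entropy bound into the TV bound that powers the mean-shift estimate together with the uniform gradient bound $L_{f,1}$. Parts (I) and (III) reduce to Lipschitzness and step counting, respectively.
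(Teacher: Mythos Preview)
Parts (I) and (III) match the paper's argument essentially line for line.

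For part (II) your coupling route is genuinely different from the paper's and in fact simpler. The paper keeps the variance centered at the $\widetilde{\mu}$-mean, then \emph{changes the integration measure} from $\widetilde{\mu}^{\theta,i}$ to $\mu_*^{\theta,i}$; the error from that measure change is controlled in total variation via Pinsker and the underlying KL bound from Theorem~\ref{Theorem:alg:LSD}, together with $\|\nabla f_\theta\|_2\le L_{f,1}$, which is precisely what produces the $2L_{f,1}^2\sqrt{\alpha}\delta$ term. A separate center shift (from the $\widetilde{\mu}$-mean to the $\mu_*$-mean) contributes the $2L_{f,2}^2\delta^2$ term. Your approach instead \emph{changes the random variable}: couple $z$ with $z'\sim\mu_*^{\theta,i}$ optimally and use the $L_{f,2}$-Lipschitzness of $\nabla_\theta f_\theta(\cdot)$ in $z$. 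After your stated starting inequality $\mathbb{V}\mathrm{ar}(\widehat{\nabla}F)\le \bE\|\widehat{\nabla}F-\nabla F(\theta)\|_2^2$ and a single insertion of $\pm\nabla_\theta f_\theta(z')$ with $(a+b)^2\le 2a^2+2b^2$, there are exactly \emph{two} terms: the sampling-gap term $2L_{f,2}^2\delta^2$ and $2\,\bE_{i,z'}\|\nabla_\theta f_\theta(z')-\nabla F(\theta)\|_2^2\le 2\sigma^2$, since the marginal of $z'$ given $i$ is $\mu_*^{\theta,i}$ and $\nabla F(\theta)$ is exactly the corresponding mean. This already yields $\mathbb{V}\mathrm{ar}(\widehat{\nabla}F)\le 2\sigma^2+2L_{f,2}^2\delta^2\le \mathbf{V}$, because $2L_{f,1}^2\sqrt{\alpha}\delta\ge 0$.

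So your ``main obstacle'' is not an obstacle at all: you do not need to expose any $\sqrt{\alpha}\delta$ dependence, and the third ``mean-shift'' contribution you describe does not arise from your own decomposition. The $L_{f,1}^2\sqrt{\alpha}\delta$ term in the lemma is an artifact of the paper's TV-based measure-change route, not a necessary feature of the bound; your coupling argument delivers a strictly sharper inequality with less machinery (no Pinsker, no reuse of LSI, no $L_{f,1}$).
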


A random vector $\theta$ is said to be a $\varrho$-stationary point if $\mathbb{E}\|\nabla F(\theta)\|_2^2\le \varrho^2$.
As the bilevel program~\eqref{Eq:bilevel:inf} is nonconvex in general, we focus on finding its stationary point using Algorithm~\ref{Alg:solving:SDRO}.
We provide its computational complexity in the theorem below.
Its proof is provided in Appendix~\ref{Appendix:Sec:double:loop:naive}.

\begin{theorem}[Complexity Bounds]\label{Theorem:complexity:bound}
Assume Assumptions~\ref{Assumption:LSD} and \ref{Assumption:SDRO} hold.
In order to use Algorithm~\ref{Alg:solving:SDRO} to obtain a $\varrho$-stationary point, it suffices to specify parameters in Algorithm~\ref{Alg:solving:SDRO} as
\[
T_{\text{out}} = \mathcal{O}(\varrho^{-4}), \quad \eta=\frac{1}{\sqrt{T_{\text{out}}\mathbf{V}}}, \quad\delta = \frac{\varrho}{2L_{f,2}},
\]
where $\mathcal{O}(\cdot)$ hides constant depending only on the initial guess, $\mathbf{V}$, and $L_{f,2}$.
Consequently, the total computational complexity of Algorithm~\ref{Alg:solving:SDRO} to find a $\varrho$-stationary point is $\widetilde{\mathcal{O}}(\varrho^{-6})$.
\end{theorem}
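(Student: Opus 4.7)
The plan is to combine the standard descent analysis of biased stochastic gradient descent with the bias/variance/complexity control already established in Lemma~\ref{Lemma:hyperparameter}. The first missing ingredient is the smoothness of the upper-level objective $F$. Starting from the representation $\nabla F(\theta)=\tfrac{1}{n}\sum_{i=1}^n \mathbb{E}_{z\sim \mu_*^{\theta,i}}[\nabla_\theta f_\theta(z)]$, I would write $\nabla F(\theta)-\nabla F(\theta')$ as a sum over $i$ of two pieces: one controlled by $\|\nabla_\theta f_\theta(z)-\nabla_{\theta'}f_{\theta'}(z)\|\le L_{f,2}\|\theta-\theta'\|$ (Assumption~\ref{Assumption:SDRO}) and one controlled by the change of the sampling measure, $|\mathbb{E}_{\mu_*^{\theta,i}}[\nabla_\theta f_\theta]-\mathbb{E}_{\mu_*^{\theta',i}}[\nabla_\theta f_\theta]|\le L_{f,2}\mathcal{W}_2(\mu_*^{\theta,i},\mu_*^{\theta',i})$. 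Because under Assumption~\ref{Assumption:LSD}\ref{Assumption:LSD:LSI} each $\mu_*^{\theta,i}$ satisfies LSI with constant $\alpha$ and its log-density depends Lipschitz-continuously on $\theta$ (with constant $L_{f,2}/(\lambda\Reg)$), a standard perturbation bound for Gibbs measures yields $\mathcal{W}_2(\mu_*^{\theta,i},\mu_*^{\theta',i})\le C\|\theta-\theta'\|$ with $C$ depending polynomially on $\alpha^{-1},L_{f,2},\lambda,\Reg$. Combining these shows $F$ is $L_F$-smooth for an explicit $L_F$.

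With $L_F$-smoothness in hand, I would invoke the descent lemma along the iterates of Algorithm~\ref{Alg:solving:SDRO}: writing $g_k=\widehat\nabla F(\theta_k;z_k)$, we have
\begin{equation*}
F(\theta_{k+1})\le F(\theta_k)-\eta\langle \nabla F(\theta_k),g_k\rangle+\tfrac{L_F\eta^2}{2}\|g_k\|_2^2.
\end{equation*}
Taking conditional expectation and splitting $\mathbb{E}[g_k]=\nabla F(\theta_k)+b_k$ with $\|b_k\|_2\le L_{f,2}\delta$ by Lemma~\ref{Lemma:hyperparameter}, Young's inequality gives $-\langle \nabla F(\theta_k),\mathbb{E}[g_k]\rangle\le -\tfrac12\|\nabla F(\theta_k)\|_2^2+\tfrac12 L_{f,2}^2\delta^2$, and the variance bound gives $\mathbb{E}\|g_k\|_2^2\le 2\|\nabla F(\theta_k)\|_2^2+2L_{f,2}^2\delta^2+\mathbf{V}$.

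Summing from $k=0$ to $T_{\text{out}}-1$, telescoping $F(\theta_k)-F^\star$, and dividing by $\eta T_{\text{out}}$ yields, after absorbing the quadratic-in-$g_k$ term using $\eta\le 1/(4L_F)$,
\begin{equation*}
\frac{1}{T_{\text{out}}}\sum_{k=0}^{T_{\text{out}}-1}\mathbb{E}\|\nabla F(\theta_k)\|_2^2
\;\lesssim\; \frac{F(\theta_0)-F^\star}{\eta T_{\text{out}}}+L_{f,2}^2\delta^2+L_F\eta\mathbf{V}.
\end{equation*}
Since $\widehat\theta$ is selected uniformly from the iterates, the left-hand side equals $\mathbb{E}\|\nabla F(\widehat\theta)\|_2^2$. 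Plugging in $\eta=1/\sqrt{T_{\text{out}}\mathbf{V}}$ and $\delta=\varrho/(2L_{f,2})$ makes the bias term exactly $\varrho^2/4$; choosing $T_{\text{out}}=\mathcal{O}(\varrho^{-4})$ balances the remaining two terms against $\varrho^2$, yielding the desired $\varrho$-stationarity. Finally, multiplying $T_{\text{out}}$ by the $\widetilde{\mathcal{O}}(\delta^{-2})=\widetilde{\mathcal{O}}(\varrho^{-2})$ gradient cost per iteration from Lemma~\ref{Lemma:hyperparameter} gives total complexity $\widetilde{\mathcal{O}}(\varrho^{-6})$.

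The main obstacle I anticipate is the smoothness of $F$: establishing $\mathcal{W}_2(\mu_*^{\theta,i},\mu_*^{\theta',i})=\mathcal{O}(\|\theta-\theta'\|)$ cleanly requires either a Holley--Stroock / Bakry--Emery style perturbation argument or a direct coupling through Langevin flows, and the resulting $L_F$ carries explicit but messy dependence on $\alpha,\lambda,\Reg,L_{f,1},L_{f,2}$. Everything downstream is routine nonconvex biased SGD analysis, so the quality of the final constants is driven entirely by how tightly one bounds this sensitivity of the worst-case measure.
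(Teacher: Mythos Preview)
Your approach is correct and essentially equivalent to the paper's, though the packaging differs. The paper introduces an auxiliary objective $\widehat{F}(\theta)$ whose gradient is defined to be $\mathbb{E}[\widehat{\nabla}F(\theta;z)]$, asserts that $\widehat{F}$ is $L_{f,2}$-smooth, and then invokes a black-box nonconvex SGD bound (Bottou et al.) for $\widehat{F}$; the bias is handled only at the very end via $\mathbb{E}\|\nabla F(\widehat\theta)\|_2^2\le 2\mathbb{E}\|\nabla\widehat{F}(\widehat\theta)\|_2^2+2L_{f,2}^2\delta^2$. You instead apply the descent lemma to $F$ directly and split off the bias inside the inner product via Young's inequality, which is the same computation unrolled.

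Where you are actually more careful than the paper is on the smoothness of the upper-level objective. The paper simply asserts the smoothness constant is $L_{f,2}$, both here and in the proof of Lemma~\ref{Lemma:descent}, without justifying why the $\theta$-dependence of $\mu_*^{\theta,i}$ does not contribute. Your proposed route---controlling $\mathcal{W}_2(\mu_*^{\theta,i},\mu_*^{\theta',i})$ via LSI and the Lipschitz dependence of the potential on $\theta$---is the right way to close this, and the resulting $L_F$ indeed picks up factors of $\alpha^{-1},\lambda,\Reg,L_{f,1}$ beyond $L_{f,2}$ (equivalently, differentiating the log-partition representation of $F$ produces a $\tfrac{1}{\lambda\Reg}\mathrm{Cov}_{\mu_*^{\theta,i}}[\nabla_\theta f_\theta]$ term bounded via Poincar\'e). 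Since the theorem statement only claims $\mathcal{O}(\cdot)$ dependence on constants, this does not affect the final rates, but your version is the more honest one.
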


\begin{remark}[Comparison with \citep{xu2025gradient}]
In this section, we present a double-loop algorithm similar to \citep{xu2025gradient}.
We remark that there are some differences in the convergence analysis part.
First, the complexity result in \citep[Theorem~2]{xu2025gradient} uses a constant stepsize, which leads to a non-vanishing optimization error (bias) in the gradient estimator~(see the last component of the equation above \citep[Appendix~C4]{xu2025gradient}). In contrast, our decaying stepsize schedule ensures this error vanishes asymptotically.
Second, our variance assumption (Assumption \ref{Assumption:SDRO}\ref{Assumption:SDRO:I}) is made on the ideal gradient at the true worst-case distributions $\mu_*^{\theta,i}$. We then explicitly control how the sampling error propagates to the variance of our practical estimator (Lemma \ref{Lemma:hyperparameter}\ref{Lemma:hyperparameter:II}). Their analysis assumes a uniform bound on the variance of the approximate stochastic gradients, which is a stronger condition.
\end{remark}

\section{Mean-Field Single-Loop Sampling Algorithm}
\label{Eq:MF:Single:loop}

The double-loop algorithm (Algorithm~\ref{Alg:solving:SDRO}) is conceptually simple but can be computationally expensive, as each outer update requires running an inner loop of Langevin dynamics to high accuracy. To improve efficiency, we now propose a single-loop algorithm that interleaves the updates of the upper-level parameter and the lower-level distribution estimators.

For $i\in[n]$, our algorithm initializes the worst-case distribution estimators $\mu_0^{(i)} = \mathcal{N}(x^{(i)}, \Reg\mathbf{I}_d)$ for the $i$-th lower-level subproblem.
At the beginning of each iteration $k$, we sample a set of lower-level subproblems with index $i\in I_k$.
Next, we update the distribution estimator $\mu_{k+1}^{(i)}$ such that each particle is updated using one-step of Langevin dynamics:
\begin{equation}\label{Eq:update:mu:k+1}
\begin{aligned}
z_{k+1}^{(i)} &= \left\{ 
\begin{aligned}
z_k^{(i)} - \tau\left( 
    -\frac{\nabla_zf_{\theta_k}(z_k^{(i)})}{\lambda} + (z_k^{(i)} - x^{(i)})
    \right) + \sqrt{2\tau\Reg}\zeta_k^{(i)},&\quad \text{if }i\in I_k\\
z_k^{(i)},&\quad\text{otherwise}.
\end{aligned}
\right.\\
\mu_{k+1}^{(i)} &=\mathrm{Law}\big(z_{k+1}^{(i)}\big).
\end{aligned}
\end{equation}
We then update the gradient estimator of the upper-level objective as
\begin{equation}\label{Eq:update:v:k+1}
v_{k+1}=\frac{1}{|I_k|}\sum_{i\in I_k}\bE_{z_{k}^{(i)}\sim\mu_{k}^{(i)}}\big[\nabla_{\theta}f_{\theta_k}(z_{k}^{(i)})\big].
\end{equation}
Compared with the true gradient $\nabla F(\theta_k)$ defined in \eqref{Eq:F:hyper}, we replace the average over all indices $i\in[n]$ with the average over sampled indices $i\in[I_k]$, and replace the true worst-case distribution $\mu_*^{\theta_k,i}$ with its estimator $\mu_k^{(i)}$ obtained from the last iteration.
Finally, we maintain the moving average estimator $r_{k+1}$ for the gradient estimator $v_{k+1}$ and update the upper-level decision $\theta_{k+1}$ using $r_{k+1}$.
The detailed procedure is provided in Algorithm~\ref{Alg:solving:SDRO:single:improved}.

\begin{remark}[Mean-Field Update]
We call Algorithm~\ref{Alg:solving:SDRO:single:improved} a mean-field algorithm because the gradient estimator $v_{k+1}$ in \eqref{Eq:update:v:k+1} uses the population expectation $\bE_{z_k^{(i)}\sim\mu_k^{(i)}}[\cdot]$, corresponding to the limit of an infinite number of particles. 
In practice, this expectation is approximated by a sample average over a finite set of particles.
The technical difficulty of the analysis for the finite-particle case is that this will make the estimator $v_{k+1}$ stochastic provided that $I_k$ is fixed.
A possible solution is the propagation of chaos~\citep{nitanda2024improved, chaintron2022propagation, suzuki2023uniform, zhu2024mean, mei2018mean}, 
a theory that quantifies the difference between the dynamics of a system of finitely many particles and its limiting behavior described by their mean-field densities.
In this work, our convergence analysis (Theorem~\ref{Theorem:convergence:final}) focuses on the idealized mean-field dynamics (population expectations). Analyzing the finite-particle case via propagation of chaos is an important direction for future work.
\end{remark}

\begin{algorithm}[!htp]
\caption{Mean-Field Single-Loop Iterative Sampling Algorithm}
\label{Alg:solving:SDRO:single:improved}
\begin{algorithmic}[1]
\REQUIRE Stepsize parameters $\eta, \tau$, initial guess $\mu_0^{(i)}, i\in[n]$, moving average estimator $r_0$, moving average parameter $\beta_0$, number of iterations $T$, mini-batch size $|I_k|$.
\FOR{$k = 0, 1, 2, \dots, T-1$}
\STATE{Randomly sample indices $I_k\subseteq [n]$}
\FOR{$i\in[n]$}
\STATE{Update $\mu_{k+1}^{(i)}$ according to \eqref{Eq:update:mu:k+1}
}
\ENDFOR
    \STATE Update gradient estimator $v_{k+1}$ according to \eqref{Eq:update:v:k+1}
    \STATE{Update $r_{k+1} = (1-\beta_0)r_k + \beta_0v_{k+1}$}
    \STATE 
    Update $\theta_{k+1} = \theta_k - \tau\eta r_{k+1}$.
\ENDFOR\\
\noindent
\textbf{Output }$\widehat{\theta}$ uniformly selected from $\{\theta_1,\ldots,\theta_{T}\}$.
\end{algorithmic}
\end{algorithm}

\subsection{Convergence Analysis of Algorithm~\ref{Alg:solving:SDRO:single:improved}}\label{Sec:proof:cov:ana}
In this subsection, We outline the key steps of the convergence analysis; detailed proofs are deferred to Appendix \ref{Appendix:Sec:proof:cov:ana}.
We first build the error bound for the objective at $k$-th and $(k+1)$-th iterations in the lemma below.
\begin{lemma}[Descent Lemma]\label{Lemma:descent}
Assume Assumption~\ref{Assumption:SDRO}\ref{Assumption:SDRO:II} holds and the stepsize parameters satisfy $\eta\tau\le\frac{1}{2L_{f,2}}$.
Consider the update in Step~8 of Algorithm~\ref{Alg:solving:SDRO:single:improved}, it holds that 
     \[
     F(\theta_{k+1}
     )\le F(\theta_k) + \frac{\tau\eta}{2}\|\nabla F(\theta_k) - r_{k+1}\|_2^2
     -
     \frac{\tau\eta}{2}\|\nabla F(\theta_k)\|_2^2
     -
     \frac{\tau\eta}{4}\|r_{k+1}\|_2^2.
     \]
\end{lemma}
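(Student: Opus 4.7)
The claim is a standard inexact-gradient descent inequality for a smooth objective, so the plan is to combine an $L$-smoothness bound on $F$ with the polarization identity and then absorb an extra quadratic term using the stepsize restriction.

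The first step is to establish that the upper-level objective $F$ is itself smooth, with Lipschitz constant compatible with the stepsize bound $1/(2L_{f,2})$ appearing in the hypothesis. The cleanest route is through the dual representation in Theorem~\ref{Thm:strong:duality},
$$F(\theta) \;\equiv\; \frac{\lambda\Reg}{n}\sum_{i=1}^n \log\,\bE_{z\sim\mathcal{N}(x^{(i)},\Reg\mathbf{I}_d)}\bigl[e^{f_{\theta}(z)/(\lambda\Reg)}\bigr],$$
which is a log--sum--exp composition of $f_{\theta}$. Differentiating under the integral recovers $\nabla F(\theta) = \tfrac{1}{n}\sum_i \bE_{\mu_{*}^{\theta,i}}[\nabla_{\theta} f_{\theta}(z)]$ as in \eqref{Eq:F:hyper}, and Lipschitz continuity of $\nabla F$ in $\theta$ follows from Assumption~\ref{Assumption:SDRO}\ref{Assumption:SDRO:II} combined with the third bullet of Assumption~\ref{Assumption:SDRO}, which controls the $\theta$-drift of the exponential-family density $u_{\theta,i}$ from \eqref{Eq:density:u:theta}.

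Second, once smoothness is in hand, I would apply the standard quadratic upper bound
$$F(\theta_{k+1}) \le F(\theta_k) + \inp{\nabla F(\theta_k)}{\theta_{k+1}-\theta_k} + \tfrac{L_{f,2}}{2}\|\theta_{k+1}-\theta_k\|_2^2,$$
substitute $\theta_{k+1}-\theta_k = -\tau\eta\, r_{k+1}$, and expand the cross-term via the polarization identity $-\inp{a}{b} = \tfrac{1}{2}(\|a-b\|_2^2 - \|a\|_2^2 - \|b\|_2^2)$ with $a=\nabla F(\theta_k)$, $b=r_{k+1}$. This produces the three target quadratic terms plus an extra $+\tfrac{L_{f,2}(\tau\eta)^2}{2}\|r_{k+1}\|_2^2$, which combines with the $-\tfrac{\tau\eta}{2}\|r_{k+1}\|_2^2$ from the polarization expansion into $-\tfrac{\tau\eta}{2}(1-L_{f,2}\tau\eta)\|r_{k+1}\|_2^2$. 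The stepsize restriction $\eta\tau \le 1/(2L_{f,2})$ gives $1-L_{f,2}\tau\eta \ge 1/2$, so this term is dominated by $-\tfrac{\tau\eta}{4}\|r_{k+1}\|_2^2$, yielding the claim.

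The main obstacle is the smoothness step for $F$: unlike the classical finite-dimensional bilevel setting, the lower level here lives in a space of probability measures, so one must quantify how $\mu_{*}^{\theta,i}$ moves with $\theta$ in a metric strong enough to push through the integrand $\nabla_{\theta} f_{\theta}(z)$. The exponential-family form of $u_{\theta,i}$ helps, and a direct perturbation analysis of the log--sum--exp dual is the most promising path for extracting a Lipschitz modulus with the desired dependence on $\lambda,\Reg,L_{f,1},L_{f,2}$; in fact, bookkeeping this constant correctly is the only real calculation in the lemma, since once smoothness with the right constant is available, the remaining steps are purely algebraic.
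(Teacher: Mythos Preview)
Your proposal is correct and follows exactly the paper's approach: invoke $L_{f,2}$-smoothness of $F$, apply the quadratic upper bound, substitute $\theta_{k+1}-\theta_k=-\tau\eta\,r_{k+1}$, expand the inner product via the polarization identity, and use $\eta\tau\le 1/(2L_{f,2})$ to absorb the residual $\tfrac{L_{f,2}(\tau\eta)^2}{2}\|r_{k+1}\|_2^2$ into $-\tfrac{\tau\eta}{4}\|r_{k+1}\|_2^2$. The paper's proof in fact just asserts in one line that ``it is easy to verify that $F(\theta)$ is $L_{f,2}$-smooth in $\theta$'' and proceeds with the algebra, so your extended discussion of how to justify this smoothness constant---which you correctly identify as the only substantive step---is more detailed than what the paper itself provides.
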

The critical step for our analysis is to bound the difference between the true hypergradient $\nabla F(\theta_k)$ and its estimator $r_{k+1}$.
To this end, let us define the auxiliary gradient
\begin{equation}
\nabla F(\theta_k;{\mu}_k^{(1:n)})
:=
\frac{1}{n}\sum_{i\in[n]}\bE_{z_k^{(i)}\sim{\mu}_k^{(i)}}[\nabla f_{\theta}(z_k^{(i)})].
\label{Eq:F:aux}
\end{equation}
Denote by $\mathcal{G}_k$ the $\sigma$-algebra generated by all the randomness up to iteration $k$.
Conditioned on $\mathcal{G}_{k-1}$, it can be shown that $v_{k+1}$ is the unbiased estimator of $\nabla F(\theta_k;\tilde{\mu}_k^{(1:n)})$, as the former uses mini-batch random samples $I_k\subseteq[n]$.
In the following, we provide the upper bound of the difference between $\nabla F(\theta_k)$ and $r_{k+1}$ using $\nabla F(\theta_k;{\mu}_k^{(1:n)})$ and $v_{k+1}$.
\begin{lemma}[Gradient Difference Lemma]\label{Lemma:gradient:diff}
Assume Assumption~\ref{Assumption:SDRO}\ref{Assumption:SDRO:II} holds, and the parameter $\beta_0\in(0,1]$, then it holds that
\begin{equation}
\begin{multlined}
\bE\big[\|\nabla F(\theta_k) - r_{k+1}\|_2^2\big|\mathcal{G}_{k-1}\big]
\le
(1-\beta_0)\|\nabla F(\theta_{k-1}) - r_{k}\|_2^2 
+ 
\frac{4L_{f,2}^2\tau^2\eta^2}{\beta_0}\|r_k\|_2^2 \\+ 
4\beta_0\left\| \nabla F(\theta_k) - \nabla F(\theta_k; \mu_{k}^{(1:n)})\right\|_2^2 
+ \beta_0^2\bE\left[\left\|\nabla F(\theta_k; \mu_{k}^{(1:n)}) - v_{k+1}\right\|_2^2\middle|\mathcal{G}_{k-1}\right].
\end{multlined}\label{Eq:gradient:diff:lemma}
\end{equation}
\end{lemma}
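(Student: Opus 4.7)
My plan is to decompose $\nabla F(\theta_k) - r_{k+1}$ into four pieces, separate the $\mathcal{G}_{k-1}$-measurable part from the zero-mean noise, and then apply Young's inequality together with the smoothness of $F$. Expanding the recursion $r_{k+1} = (1-\beta_0)r_k + \beta_0 v_{k+1}$ and inserting $(1-\beta_0)\nabla F(\theta_{k-1})$ and $\beta_0\nabla F(\theta_k;\mu_k^{(1:n)})$, I would write
\begin{align*}
\nabla F(\theta_k) - r_{k+1}
&= \underbrace{(1-\beta_0)\bigl(\nabla F(\theta_{k-1}) - r_k\bigr)}_{=:X_1} + \underbrace{(1-\beta_0)\bigl(\nabla F(\theta_k) - \nabla F(\theta_{k-1})\bigr)}_{=:X_2}\\
&\quad + \underbrace{\beta_0\bigl(\nabla F(\theta_k) - \nabla F(\theta_k;\mu_k^{(1:n)})\bigr)}_{=:X_3} + \underbrace{\beta_0\bigl(\nabla F(\theta_k;\mu_k^{(1:n)}) - v_{k+1}\bigr)}_{=:Y}.
\end{align*}

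In the mean-field regime, $\theta_{k-1}, \theta_k, r_k$ and all of $\mu_k^{(1:n)}$ are $\mathcal{G}_{k-1}$-measurable, and the only fresh randomness entering $v_{k+1}$ is the uniform mini-batch $I_k$. A direct counting argument (each $i\in[n]$ lies in $I_k$ with probability $|I_k|/n$) yields $\bE[v_{k+1}\mid\mathcal{G}_{k-1}] = \nabla F(\theta_k;\mu_k^{(1:n)})$, so $\bE[Y\mid\mathcal{G}_{k-1}] = 0$ while $X_1, X_2, X_3$ are $\mathcal{G}_{k-1}$-measurable. The conditional Pythagoras identity then gives
\[
\bE\bigl[\|\nabla F(\theta_k) - r_{k+1}\|_2^2\mid\mathcal{G}_{k-1}\bigr]
= \|X_1 + X_2 + X_3\|_2^2 + \beta_0^2\,\bE\bigl[\|\nabla F(\theta_k;\mu_k^{(1:n)}) - v_{k+1}\|_2^2\mid\mathcal{G}_{k-1}\bigr].
\]

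For the deterministic piece I would apply Young's inequality with parameter $\beta_0$, namely $\|X_1 + (X_2+X_3)\|_2^2 \le (1+\beta_0)\|X_1\|_2^2 + (1+1/\beta_0)\|X_2+X_3\|_2^2$, and then $\|X_2+X_3\|_2^2 \le 2\|X_2\|_2^2 + 2\|X_3\|_2^2$. The two book-keeping inequalities $(1+\beta_0)(1-\beta_0)^2 \le 1-\beta_0$ and $2 + 2/\beta_0 \le 4/\beta_0$, both valid for $\beta_0\in(0,1]$, collapse the $X_1$ contribution to $(1-\beta_0)\|\nabla F(\theta_{k-1}) - r_k\|_2^2$ and leave a prefactor of $4/\beta_0$ on the remaining two pieces. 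For $X_2$ I would invoke the Lipschitz continuity of $\nabla F$ (with a constant inherited from Assumption~\ref{Assumption:SDRO}\ref{Assumption:SDRO:II} and written as $L_{f,2}$ throughout) together with $\theta_k - \theta_{k-1} = -\tau\eta r_k$, producing $\frac{4L_{f,2}^2\tau^2\eta^2}{\beta_0}\|r_k\|_2^2$; the $X_3$ piece passes through untouched and contributes $4\beta_0\|\nabla F(\theta_k) - \nabla F(\theta_k;\mu_k^{(1:n)})\|_2^2$, completing the bound.

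The main technical point I expect to need care with is the Lipschitz constant of $\nabla F$ itself. Since $F(\theta) = \frac{1}{n}\sum_i \bE_{\mu_*^{\theta,i}}[f_\theta(z)]$ depends on $\theta$ both through the integrand and through the Gibbs density $\mu_*^{\theta,i}$, bounding $\|\nabla F(\theta_k) - \nabla F(\theta_{k-1})\|_2$ requires not only Assumption~\ref{Assumption:SDRO}\ref{Assumption:SDRO:II} but also control of how $\mu_*^{\theta,i}$ moves with $\theta$ in $\mathcal{W}_2$, which comes from the LSI in Assumption~\ref{Assumption:LSD}\ref{Assumption:LSD:LSI}. Once this smoothness constant is identified with (a constant multiple of) $L_{f,2}$, the rest is algebra, and the tailored Young parameter $\beta_0$ is precisely what allows the $(1-\beta_0)$ contraction factor to appear in the final recursion.
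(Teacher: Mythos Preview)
Your proposal is correct and follows essentially the same route as the paper: the same four-term decomposition, the same orthogonality argument for the mini-batch noise, Young's inequality with parameter $\beta_0$ followed by $\|a+b\|_2^2\le 2\|a\|_2^2+2\|b\|_2^2$, and the same book-keeping simplifications for $\beta_0\in(0,1]$. The only difference is that the paper does not carry out the careful justification of the $L_{f,2}$-smoothness of $F$ that you anticipate; it simply asserts ``It is easy to verify that $F(\theta)$ is $L_{f,2}$-smooth in $\theta$'' (in the proof of Lemma~\ref{Lemma:descent}) and uses that directly here.
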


A key technical challenge arises in bounding the third term of \eqref{Eq:gradient:diff:lemma}. By Pinsker’s Inequality~\citep{cover1999elements}, 
\begin{align*}
&\left\| \nabla F(\theta_k) - \nabla F(\theta_k; \mu_{k}^{(1:n)})\right\|_2^2
\\=&
\left\|
\frac{1}{n}\sum_{i\in[n]}\left[ 
\bE_{z\sim\mu_k^{(i)}}[\nabla f_{\theta_k}(z)] - \bE_{z\sim\mu_*^{\theta_k,i}}[\nabla f_{\theta_k}(z)]
\right]
\right\|_2^2\\
\le&
\frac{L_{f,1}^2}{n}\sum_{i\in[n]}
\mathrm{TV}(\mu_k^{(i)}, \mu_*^{\theta_k,i})^2\le 
\frac{L_{f,1}^2}{2n}\sum_{i\in[n]}\mathcal{D}_{\mathrm{KL}}(\mu_{k}^{(i)}, \mu_*^{i, \theta_k}).
\end{align*}
This error depends on the KL-divergence between our running estimators $\mu_k^{(i)}$ and the true optimal distributions $\mu*^{i,\theta_k}$. Unlike standard bilevel optimization where lower-level solutions lie in a Euclidean space, here they are probability distributions. Therefore, standard techniques~(such as \citep{hu2022multi, qiu2022large}) for Euclidean norm bounds, especially the triangular inequality, do not apply.
To overcome this, we adapt and extend the SDE-based techniques from sampling literature \citep{vempala2019rapid, marion2024implicit} to our setup, which contains multiple target distributions, and each distribution is time-varying.
Lemma \ref{Lemma:KL:div} provides the desired error bound, relating the cumulative KL error to algorithm parameters and the momentum gradient norm $\|r_{k+1}\|_2^2$.

\begin{lemma}[KL-Divergence Bound]\label{Lemma:KL:div}
Assume Assumptions~\ref{Assumption:LSD} and \ref{Assumption:SDRO} hold, then it holds that
\begin{align*}
&\sum_{k=0}^T\bE\big[ 
\mathcal{D}_{\mathrm{KL}}(\mu_{k}^{(i)} \| \mu_*^{i, \theta_{k}})
\big]\\
\le&
\frac{4n}{\alpha\tau|I_k|}
\mathcal{D}_{\mathrm{KL}}(\mu_{0}^{(i)} \| \mu_*^{i, \theta_{0}})
 + \frac{12\eta L_{f,1}^2Tn}{\lambda\Reg\alpha|I_k|} + \frac{32\tau\Reg dL_{G,2}^2T}{\alpha} + \frac{20\eta n}{\lambda\Reg\alpha|I_k|}\sum_{k=0}^{T-1}\bE\big[ \|r_{k+1}\|_2^2
\big].
\end{align*}
\end{lemma}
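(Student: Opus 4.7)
The plan is to track the per-index KL divergence $\Phi_k^{(i)} := \bE\big[\mathcal{D}_{\mathrm{KL}}(\mu_k^{(i)}\|\mu_*^{i,\theta_k})\big]$ along the trajectory and derive a one-step recursion $\Phi_{k+1}^{(i)} \le (1-\gamma)\Phi_k^{(i)} + \varepsilon_k$ that can then be summed by a geometric-series / discrete Gr\"onwall argument. To isolate the two distinct sources of change---the Langevin update of $\mu_{k+1}^{(i)}$ and the parameter drift of $\mu_*^{i,\theta_{k+1}}$---I decompose each step by inserting the KL against the \emph{previous} target,
\[
\mathcal{D}_{\mathrm{KL}}(\mu_{k+1}^{(i)}\|\mu_*^{i,\theta_{k+1}})
= \mathcal{D}_{\mathrm{KL}}(\mu_{k+1}^{(i)}\|\mu_*^{i,\theta_k})
+ \bigl[\mathcal{D}_{\mathrm{KL}}(\mu_{k+1}^{(i)}\|\mu_*^{i,\theta_{k+1}}) - \mathcal{D}_{\mathrm{KL}}(\mu_{k+1}^{(i)}\|\mu_*^{i,\theta_k})\bigr],
\]
so that the first summand captures the Langevin update alone and the second the target drift alone.

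For the Langevin summand, when $i\in I_k$ the update in~\eqref{Eq:update:mu:k+1} is exactly one Euler--Maruyama step of the SDE whose stationary density is $u_{\theta_k,i}$; its drift $-\nabla U_{\theta_k,i}(z)=\nabla_z f_{\theta_k}(z)/\lambda-(z-x^{(i)})$ is $L_{G,2}$-Lipschitz by Assumption~\ref{Assumption:LSD}\ref{Assumption:LSD:smooth}, and the target satisfies LSI with constant $\alpha$ by Assumption~\ref{Assumption:LSD}\ref{Assumption:LSD:LSI}. I will adapt the continuous-time interpolation argument of~\citep{vempala2019rapid}: differentiate $\mathcal{D}_{\mathrm{KL}}(\hat{\rho}_s\|\mu_*^{i,\theta_k})$ along a frozen-drift SDE, control the Fisher-information dissipation via LSI, and bound the drift-mismatch term via Cauchy--Schwarz and the mean-squared displacement $\bE\|z-Z_t\|_2^2=O(\tau\Reg d)$; integrating over $[0,\tau]$ yields a one-step contraction-plus-discretization bound. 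When $i\notin I_k$, $\mu_{k+1}^{(i)}=\mu_k^{(i)}$ and no effect occurs. Averaging over the random choice of $I_k$ (with $\Pr(i\in I_k)=|I_k|/n$) converts this into an expected contraction of rate $\Theta(\alpha\tau\Reg|I_k|/n)$ together with a proportionally scaled discretization error of order $\tau^2\Reg d L_{G,2}^2\cdot|I_k|/n$; after summation and division by the contraction rate, this supplies the third term of the stated bound.

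For the target-drift summand, I will use the closed form $u_{\theta,i}(z)=Z_i(\theta)^{-1}\exp(g_{\theta,i}(z))$ extracted from~\eqref{Eq:density:u:theta} to write
\[
\mathcal{D}_{\mathrm{KL}}(\mu\|\mu_*^{i,\theta_{k+1}}) - \mathcal{D}_{\mathrm{KL}}(\mu\|\mu_*^{i,\theta_k})
= \tfrac{1}{\lambda\Reg}\bE_\mu[f_{\theta_k}-f_{\theta_{k+1}}] + \log\tfrac{Z_i(\theta_{k+1})}{Z_i(\theta_k)},
\]
and bound both terms by $\|\nabla_\theta f_\theta\|_2\le L_{f,1}$ (Assumption~\ref{Assumption:SDRO}\ref{Assumption:SDRO:II}) through the mean-value theorem together with the identity $\partial_\theta \log Z_i(\theta) = \bE_{\mu_*^{i,\theta}}[\nabla_\theta f_\theta]/(\lambda\Reg)$. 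Substituting $\theta_{k+1}-\theta_k=-\tau\eta\,r_{k+1}$ yields a per-step drift of order $L_{f,1}\tau\eta\|r_{k+1}\|_2/(\lambda\Reg)$, which is only \emph{linear} in $\|r_{k+1}\|_2$. A Young's inequality matched to the contraction rate---splitting the drift into a deterministic piece and a piece quadratic in $\|r_{k+1}\|_2$---then produces, after summation and division by $\gamma$, the second term $O(\eta L_{f,1}^2 Tn/(\lambda\Reg\alpha|I_k|))$ and the fourth term $O(\eta n/(\lambda\Reg\alpha|I_k|))\sum\|r_{k+1}\|_2^2$ of the stated bound; the initial condition $\Phi_0^{(i)}$ then picks up the prefactor $1/\gamma \propto n/(\alpha\tau|I_k|)$, giving the first term.

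The main obstacle is that, unlike standard bilevel analyses where the lower-level solution lies in a Euclidean space and admits a quadratic Bregman-type perturbation bound, here the lower-level solution is a probability measure and the most natural bound on the target-drift term is only linear in $\|\theta_{k+1}-\theta_k\|_2$. Balancing this linear drift against the already-weak mini-batched contraction rate $\Theta(\alpha\tau\Reg|I_k|/n)$, without either washing out the contraction or leaving a non-summable residual, is what dictates the specific choice of Young's inequality; the simultaneous handling of the random mini-batch of lower-level updates and the continuous-time Langevin dissipation is managed by conditioning on the randomness through iteration $k-1$ before averaging.
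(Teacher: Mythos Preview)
Your plan is correct and will yield the stated bound with the same orders; it differs from the paper's proof mainly in how the ``updated'' case $i\in I_k$ is handled. The paper also splits by whether $i\in I_k$ (their~\eqref{Eq:KL:mu:mu:sta}), but for $i\in I_k$ it runs a \emph{joint} continuous-time interpolation in which both the Langevin SDE $\rho_t$ and the target $\mu_*^{i,\vartheta_t}$ (with $\vartheta_t=\theta_k-t\eta r_{k+1}$) evolve simultaneously, following~\citep{marion2024implicit}; the resulting differential inequality~\eqref{Eq:time:div:KL} carries four terms (Fisher dissipation, a discretization term $\mathbf{A}_2$, a drift-mismatch term $\mathbf{A}_3$ from $\nabla_2G_i(\vartheta_t,\cdot)\neq\nabla_2G_i(\theta_k,\cdot)$, and a target-change term $\mathbf{A}_4$), each of which is bounded separately before Gr{\"o}nwall is applied. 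Your route is more modular: by freezing the target at $\mu_*^{i,\theta_k}$ during the Langevin step, the drift in~\eqref{Eq:update:mu:k+1} is \emph{exactly} matched to that target, so only the standard Vempala--Wibisono terms (dissipation and discretization) appear, and the $\mathbf{A}_3$-type cross-term never arises; the parameter drift is then handled once, uniformly in $\mu$, by your closed-form identity for $\mathcal{D}_{\mathrm{KL}}(\mu\|\mu_*^{i,\theta_{k+1}})-\mathcal{D}_{\mathrm{KL}}(\mu\|\mu_*^{i,\theta_k})$. The paper's treatment of the $i\notin I_k$ case (their~\eqref{Eq:upper:lemma:1}) is in fact the continuous-time version of exactly your drift identity, so there the two agree. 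What your decoupling buys is a shorter argument with fewer error terms to track; what the paper's coupled interpolation buys is a single Gr{\"o}nwall step that already accounts for the moving target, which is closer in spirit to the single-loop analyses of~\citep{marion2024implicit}. One minor remark: your stated contraction rate $\Theta(\alpha\tau\Reg|I_k|/n)$ carries an extra $\Reg$ compared to the paper's $\alpha\tau|I_k|/(4n)$; this is a normalization question (whether one writes the SDE drift as $\nabla G$ or $\nabla V=\nabla G/\Reg$) and does not affect the structure of the bound, but you should reconcile it when matching constants.
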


Combining the lemmas above, we derive the convergence theorem for Algorithm~\ref{Alg:solving:SDRO:single:improved} below.

\begin{theorem}[Convergence Analysis of Algorithm~\ref{Alg:solving:SDRO:single:improved}]\label{Theorem:convergence:final}
Assume Assumptions~\ref{Assumption:LSD} and \ref{Assumption:SDRO} hold, and with the following choices of parameters in Algorithm~\ref{Alg:solving:SDRO:single:improved}:
\begin{align*}
\beta_0&\le \frac{\varrho^2|I_k|}{6L_{f,2}^2},\quad 
\tau\le \frac{\varrho^2\alpha}{384\Reg dL_{G,2}^2L_{f,1}^2},\\
\eta&\le \min\left( 
\frac{\varrho^2\lambda\Reg\alpha|I_k|}{144L_{f,1}^2L_{f,2}^2n}, 
\frac{\lambda\Reg\alpha|I_k|}{160L_{f,2}^2n}
\right)\\
T&\ge \max\left( 
\frac{12\bE[F(\theta_0) - F(\theta_*)]}{\eta\tau\varrho^2}, 
\frac{6\bE\|\nabla F(\theta_0) - z_1\|_2^2}{\beta_0\varrho^2}, 
\frac{48L_{f,2}^2}{\alpha\tau|I_k|\varrho^2}\sum_{i=1}^n\mathcal{D}_{\mathrm{KL}}(\mu_0^{(i)}\|\mu_*^{i,\theta_0})
\right),
\end{align*}
Algorithm~\ref{Alg:solving:SDRO:single:improved} finds a $\varrho$-stationary solution of Problem~\eqref{Eq:bilevel:inf}.
The total computational complexity of Algorithm~\ref{Alg:solving:SDRO} to obtain a $\varrho$-stationary point is $\mathcal{O}(\varrho^{-6}\cdot \frac{n}{|I_k|})$, where $\mathcal{O}(\cdot)$ hides constants depending only on $\lambda,\Reg,\alpha,L_{f,1},L_{f,2}$, and the initial guess.
\end{theorem}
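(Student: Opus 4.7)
The plan is to build a Lyapunov potential from the descent Lemma~\ref{Lemma:descent} and the gradient-difference Lemma~\ref{Lemma:gradient:diff}, telescope it across iterations, and then close the resulting inequality using the KL bound in Lemma~\ref{Lemma:KL:div}. Concretely, I would define
\[
\Phi_k \;=\; F(\theta_k) - F(\theta_*) + c\,\bE\|\nabla F(\theta_{k-1}) - r_k\|_2^2,
\]
where $c$ is on the order of $\tau\eta/\beta_0$, and add the bound of Lemma~\ref{Lemma:descent} to $c$ times the bound of Lemma~\ref{Lemma:gradient:diff}. Under the stepsize constraint $\eta\tau\le 1/(2L_{f,2})$, the positive $\tfrac{4L_{f,2}^2\tau^2\eta^2}{\beta_0}\|r_k\|_2^2$ coming from Lemma~\ref{Lemma:gradient:diff} is dominated by the negative $-\tfrac{\tau\eta}{4}\|r_{k+1}\|_2^2$ in Lemma~\ref{Lemma:descent}, giving a one-step inequality of the form
\[
\Phi_{k+1} \;\le\; \Phi_k - \tfrac{\tau\eta}{2}\bE\|\nabla F(\theta_k)\|_2^2 - C_1\bE\|r_{k+1}\|_2^2 + C_2 B_k^{\mathrm{bias}} + C_3 V_k^{\mathrm{var}},
\]
with $B_k^{\mathrm{bias}}=\bE\|\nabla F(\theta_k)-\nabla F(\theta_k;\mu_k^{(1:n)})\|_2^2$ and $V_k^{\mathrm{var}}=\beta_0^2\bE\|\nabla F(\theta_k;\mu_k^{(1:n)})-v_{k+1}\|_2^2$.

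The main obstacle is the bias term $B_k^{\mathrm{bias}}$, which is where the infinite-dimensional nature of the lower-level problems bites. By Pinsker's inequality (as displayed right before Lemma~\ref{Lemma:KL:div}) I would dominate it by $\tfrac{L_{f,1}^2}{2n}\sum_{i=1}^n \mathcal{D}_{\mathrm{KL}}(\mu_k^{(i)}\|\mu_*^{i,\theta_k})$. Summing the one-step inequality from $k=0$ to $T-1$ converts this into the cumulative KL-divergence controlled by Lemma~\ref{Lemma:KL:div}. The delicate point is that Lemma~\ref{Lemma:KL:div} itself produces, besides vanishing initial and $T$-proportional terms, a feedback contribution $\tfrac{20\eta n}{\lambda\Reg\alpha|I_k|}\sum_{k=0}^{T-1}\bE\|r_{k+1}\|_2^2$. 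To close the argument I need this feedback to be absorbed into the aggregated $-C_1\sum_k\bE\|r_{k+1}\|_2^2$ term of the telescoped Lyapunov inequality; this forces precisely the condition $\eta\le \lambda\Reg\alpha|I_k|/(160 L_{f,2}^2 n)$ stated in the theorem. The two residual $T$-proportional pieces of Lemma~\ref{Lemma:KL:div}, of sizes $\mathcal{O}(\eta L_{f,1}^2 /(\lambda\Reg\alpha))$ and $\mathcal{O}(\tau\Reg dL_{G,2}^2/\alpha)$, are then forced to be $\le \varrho^2/6$ by the two remaining upper bounds on $\eta$ and $\tau$ in the theorem.

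The variance term $V_k^{\mathrm{var}}$ is easier. Conditioning on $\mathcal{G}_{k-1}$ fixes $\mu_k^{(1:n)}$, so $v_{k+1}$ in \eqref{Eq:update:v:k+1} is the empirical mean of $|I_k|$ i.i.d.\ draws from $\{\bE_{z\sim\mu_k^{(i)}}[\nabla_\theta f_{\theta_k}(z)]\}_{i\in[n]}$, whose variance equals $\tfrac{1}{|I_k|}\mathrm{Var}_{i\sim\mathrm{Uniform}([n])}(\bE_{z\sim\mu_k^{(i)}}[\nabla f_{\theta_k}(z)])$. I would expand this variance around the ideal variance under $\mu_*^{\theta_k,i}$ appearing in Assumption~\ref{Assumption:SDRO}\ref{Assumption:SDRO:I}, using Pinsker plus the Lipschitz bound $L_{f,1}$ to bound the discrepancy by another KL term that is again swept up by Lemma~\ref{Lemma:KL:div}. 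Multiplied by $\beta_0^2$ and accumulated over $T$, the leading contribution is $\beta_0 \sigma^2/|I_k|$ per iteration, and $\beta_0\le \varrho^2|I_k|/(6L_{f,2}^2)$ keeps it at $\mathcal{O}(\varrho^2)$.

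Putting the pieces together, telescoping and dividing by $\tau\eta T/2$ yields
\[
\tfrac{1}{T}\sum_{k=0}^{T-1}\bE\|\nabla F(\theta_k)\|_2^2 \;\le\; \tfrac{2\Phi_0}{\tau\eta T} + (\text{three }\mathcal{O}(\varrho^2)\text{ perturbation terms}),
\]
where $\Phi_0$ depends on $F(\theta_0)-F(\theta_*)$, $\|\nabla F(\theta_0)-r_1\|_2^2$, and $\sum_i\mathcal{D}_{\mathrm{KL}}(\mu_0^{(i)}\|\mu_*^{i,\theta_0})$. Requiring each of these three initial contributions to be at most $\varrho^2/3$ recovers the three lower bounds on $T$ in the theorem. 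Since $\widehat{\theta}$ is drawn uniformly from $\{\theta_1,\ldots,\theta_T\}$, this certifies $\bE\|\nabla F(\widehat{\theta})\|_2^2\le \varrho^2$. Finally, each iteration performs $\mathcal{O}(|I_k|)$ gradient evaluations (one Langevin step per sampled lower-level problem, and the aggregation forming $v_{k+1}$); multiplying by the dominant choice $T=\mathcal{O}(1/(\eta\tau\varrho^2))=\mathcal{O}(n\varrho^{-6}/|I_k|^2)$ from the first lower bound gives the claimed total complexity $\mathcal{O}(\varrho^{-6} n/|I_k|)$.
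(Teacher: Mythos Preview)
Your plan matches the paper's proof almost exactly: combine the descent Lemma~\ref{Lemma:descent}, the gradient-difference Lemma~\ref{Lemma:gradient:diff}, Pinsker to turn the bias into an average KL, and Lemma~\ref{Lemma:KL:div} to close the feedback through the aggregated $-\sum_k\bE\|r_{k+1}\|_2^2$ term. The paper does not package things into a single Lyapunov potential $\Phi_k$; it telescopes Lemma~\ref{Lemma:descent} first, then substitutes the telescoped version of Lemma~\ref{Lemma:gradient:diff} (displayed as \eqref{Eq:before:Thm:4.5}) and finally plugs in Lemma~\ref{Lemma:KL:div}. The two presentations are algebraically equivalent, and the parameter choices you back out (the $\eta\le \lambda\Reg\alpha|I_k|/(160nL_{f,2}^2)$ absorption condition and the $\varrho^2/6$ budgeting for the two $T$-proportional residues of Lemma~\ref{Lemma:KL:div}) coincide with the paper's.

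Two small discrepancies are worth noting. First, your treatment of the variance term $V_k^{\mathrm{var}}$ is more elaborate than necessary: the paper does not invoke Assumption~\ref{Assumption:SDRO}\ref{Assumption:SDRO:I} at all here, but simply bounds $\bE[\|\nabla F(\theta_k;\mu_k^{(1:n)})-v_{k+1}\|_2^2\mid\mathcal{G}_{k-1}]\le L_{f,1}^2/|I_k|$ using $\|\nabla_\theta f_{\theta}(z)\|_2\le L_{f,1}$, which after multiplication by $\beta_0^2$ and telescoping produces the $\beta_0 L_{f,1}^2/|I_k|$ residual controlled by the stated bound on $\beta_0$. Your route through $\sigma^2$ would also work (by law of total variance plus a TV/Pinsker perturbation), but it generates an extra KL correction that must be re-absorbed into Lemma~\ref{Lemma:KL:div}, and it does not match the constant in the theorem statement. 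Second, your final complexity algebra has a slip: with $\eta=\mathcal{O}(\varrho^2|I_k|/n)$ and $\tau=\mathcal{O}(\varrho^2)$ one gets $T=\mathcal{O}(n\varrho^{-6}/|I_k|)$, not $n\varrho^{-6}/|I_k|^2$.
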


In Theorem~\ref{Theorem:convergence:final}, the obtained $\varrho^{-6}$ complexity is likely suboptimal compared to the $\varrho^{-4}$ lower bound for general nonconvex stochastic optimization. We hypothesize this gap stems from two sources: (i) the discretization error of the Langevin step, which requires careful control of the KL divergence between iterated distributions, and (ii) the conservative nature of our KL-divergence bound (Lemma \ref{Lemma:KL:div}) compared to error bounds typically available for finite-dimensional strongly convex subproblems.
Also, in our complexity analysis, we do not specify the mini-batch size $|I_k|$ in each iteration.
This offers a trade-off: using $|I_k|=1$ minimizes per-iteration cost, while larger $|I_k|$ reduces the number of iterations $T$ needed, which can be exploited in parallel computing settings


\begin{figure}[ht]
    \centering
    \begin{subfigure}{\textwidth}
        \centering
        \includegraphics[width=\textwidth]{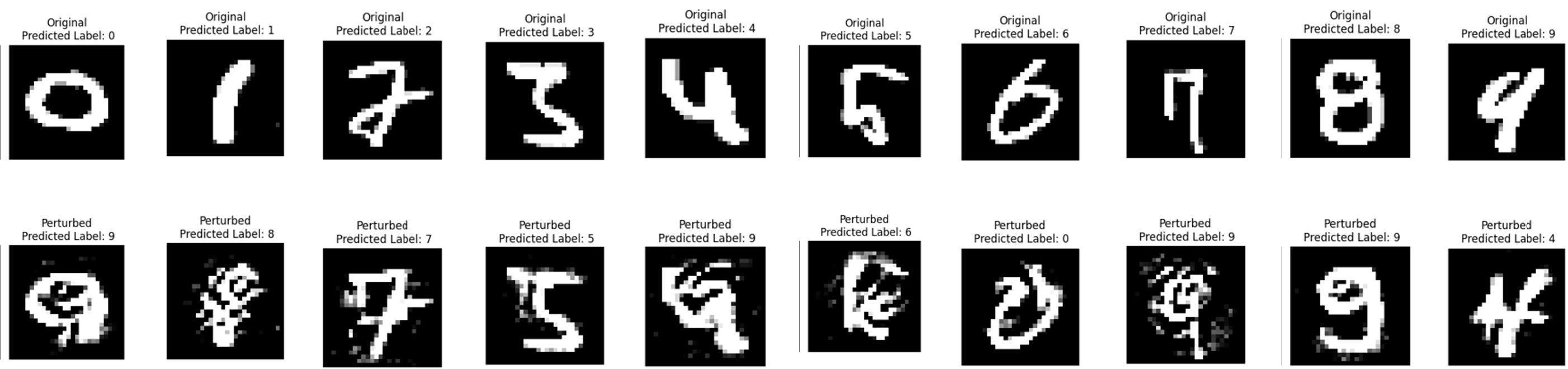}
        \caption{MNIST Dataset}
        \label{fig:raw_samples}
    \end{subfigure}
    \hfill
    \begin{subfigure}{\textwidth}
        \centering
        \includegraphics[width=\textwidth]{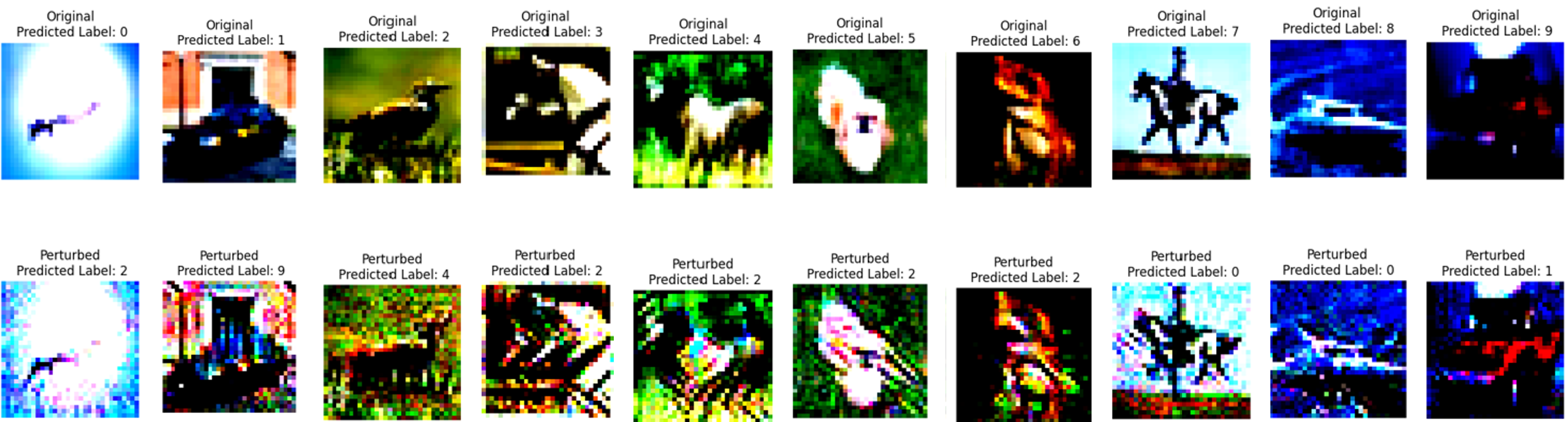}
        \caption{CIFAR-10 Dataset}
        \label{fig:adversarial_samples}
    \end{subfigure}
    \caption{Raw and adversarial samples found by the Sinkhorn DRO. 
    Two subfigures represent numerical experiments for two datasets (MNIST and CIFAR-10).
    For each subfigure, plots on the top represent the raw samples, and plots on the bottom represent the perturbed samples using Algorithm~\ref{Alg:solving:SDRO:single:improved}.}
    \label{fig:mnist_comparison}
\end{figure}
\section{Applications}\label{Sec:numerical:study}
In this section, we validate the performance of our proposed algorithms for the adversarial classification task and follow the similar experiment setup as in \citep{sinha2018certifiable}. It aims to solve the distributionally robust optimization problem
\[
\min_{\theta}~\left\{\max_{\mu}~\bE_{(x,y)\sim\mu}[\ell(f_{\theta}(x), y)] - \lambda\mathcal{S}_{\Reg}(\widehat{\mu}, \mu)\right\},
\]
where $f_{\theta}(x)$ denotes a neural network classifier consisting of $8*8, 6*6,$ and $5*5$ convolutional filer layers with ELU activations followed by a fully connected layer and softmax output, $\ell(\hat{y}, y)$ denotes the cross-entropy classification loss, and the Sinkhorn discrepancy peanlty only considers the perturbation for the data feature part instead of the data label part.
The reference distribution $\widehat{\mu}$ is constructed using the empirical samples from the MNIST~\citep{lecun10} or CIFAR-10~\citep{krizhevsky2009learning} training dataset.

\subsection{Visualization of Worst-Case Distributions}
In this subsection, we visualize the samples from worst-case distribution by solving the bilevel program~\eqref{Eq:bilevel:inf} using Algorithm~\ref{Alg:solving:SDRO:single:improved}.
We specify the hyper-parameters $\Reg=0.1$ and $\lambda=20$.
The results are provided in Figure~\ref{fig:mnist_comparison}, which shows the qualitative changes from the original images to perturbed images for the MNIST or CIFAR-10 datasets. 
The figures demonstrate that Sinkhorn DRO induces more meaningful contextual changes to the original images to confuse the classifier, which is more aligned with our intuition for the adversarial classification task.

\subsection{Ablation Study}
Next, we quantitatively validate the effectiveness of Algorithm~\ref{Alg:solving:SDRO:single:improved}.
We fine-tune the regularization for Sinkhorn DRO or Wasserstein DRO such that the $\ell_2$-norm between the original image $X$ and the perturbed one is controlled within $C_2 = 0.04*\mathbb{E}_{\widehat{\mu}}[\|X\|_2]$.
To assess the robustness of the proposed models, we apply a Projected Gradient Method (PGM) attack with $\ell_2$-norm constraints to the test datasets. The perturbation magnitude $\Delta$ is normalized by the average $\ell_2$-norm of the test features. 
We examine the performance using the misclassification rate on perturbed datasets.




\begin{figure}[!htbp]
    \centering
    \begin{subfigure}{\textwidth}
        \centering
        \includegraphics[width=0.8\textwidth]{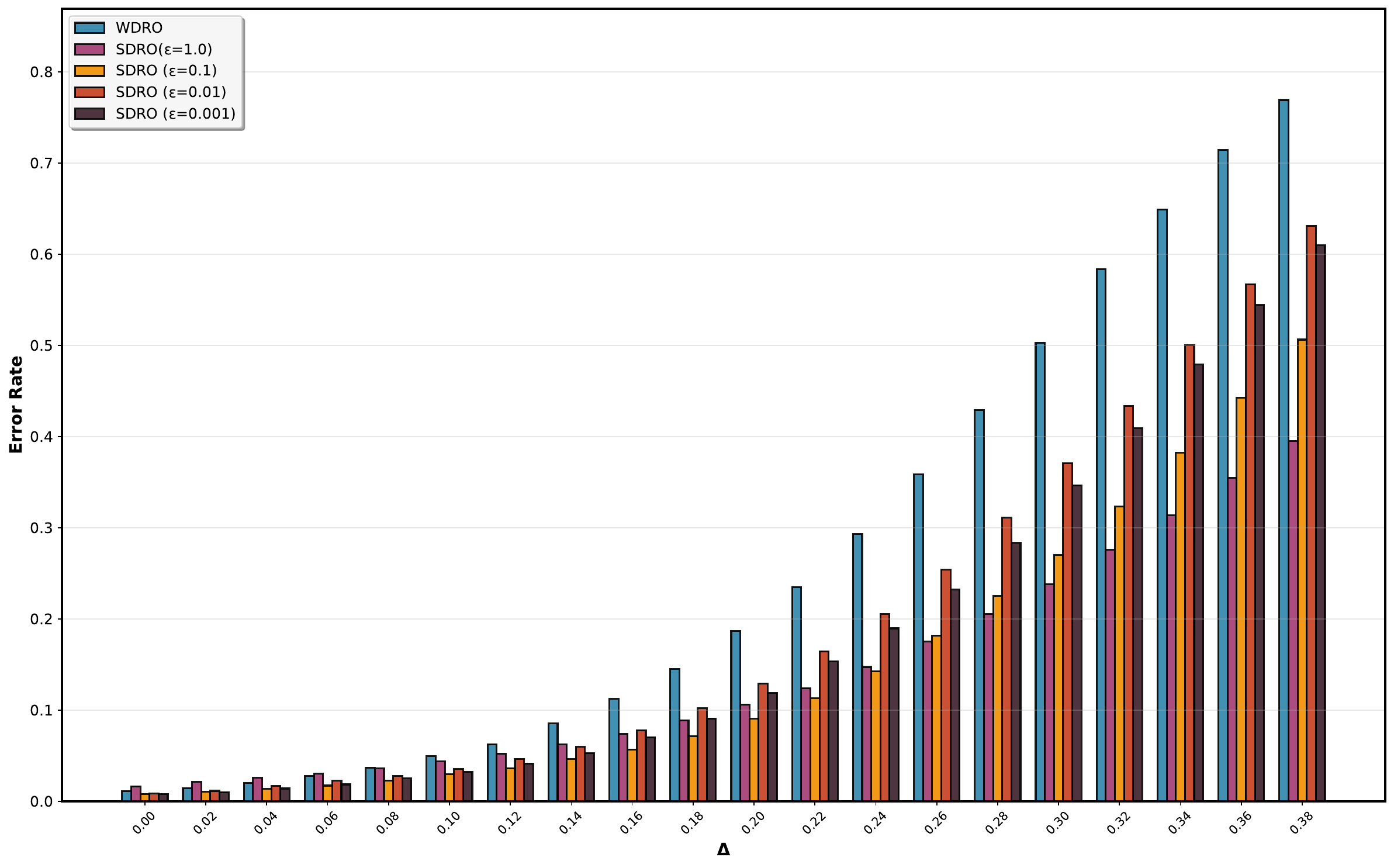}
        \caption{MNIST Dataset}
        \label{fig:mnist}
    \end{subfigure}
    \hfill
    \begin{subfigure}{\textwidth}
        \centering
        \includegraphics[width=0.8\textwidth]{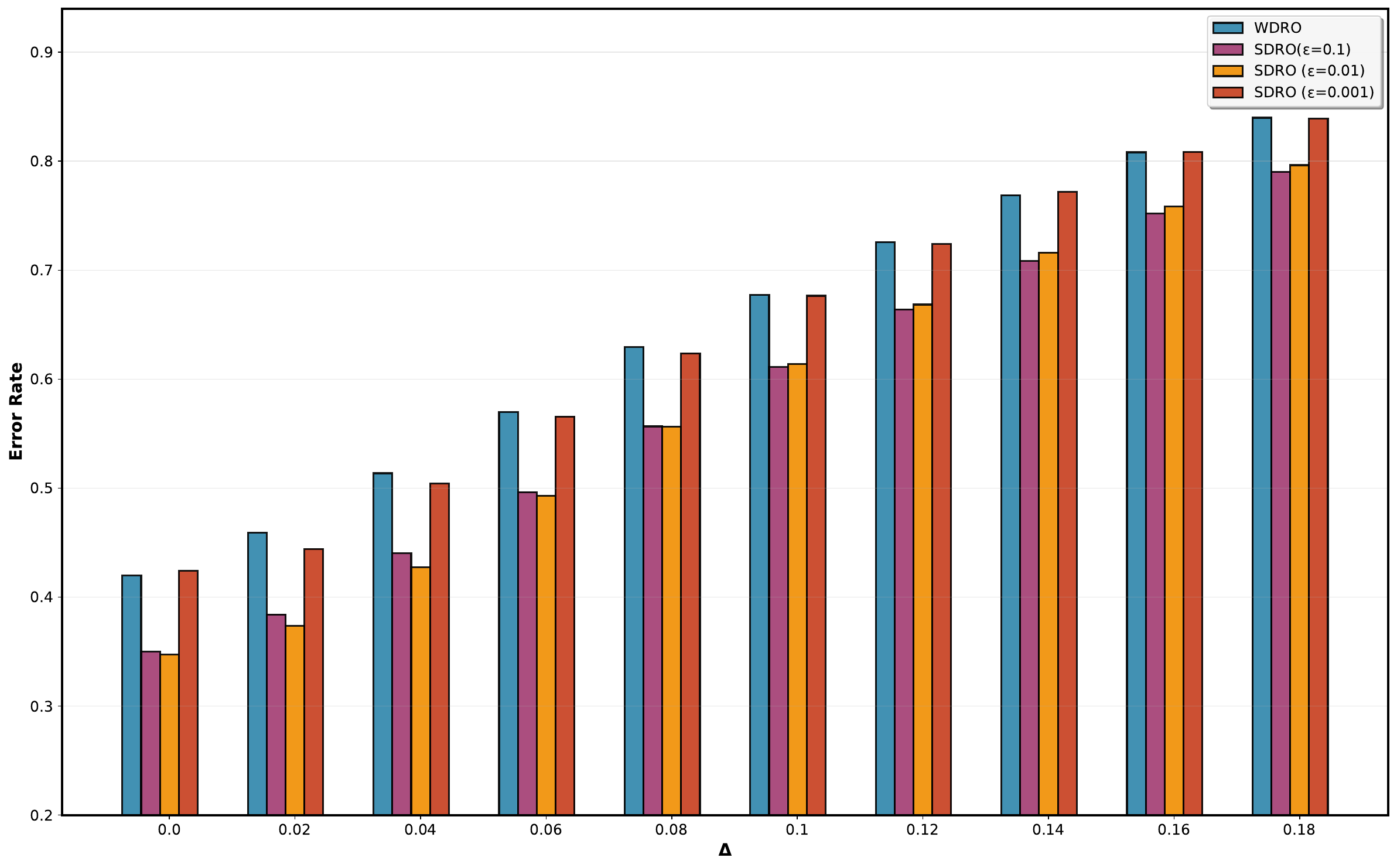}
        \caption{CIFAR-10 Dataset}
        \label{fig:cifar}
    \end{subfigure}
    \caption{
    Results of adversarial training using MNIST or CIFAR-10 datasets for Wasserstein DRO or Sinkhorn DRO models.
    For each plot, the $x$-axis refers to the level of adversarial perturbation of PGD attack, and the $y$-axis refers to the misclassification rate on the perturbed testing dataset.}
    \label{fig:mnist:cifar}
\end{figure}

Figure \ref{fig:mnist:cifar} reports the classification accuracy of the proposed method and the baseline under various adversarial perturbations. On MNIST, our approach consistently outperforms the baseline across all noise levels. When the perturbation budget is small, $\varepsilon=0.1$ yields the highest accuracy; as the budget increases, $\varepsilon=1$ becomes superior. This observation indicates that a larger $\varepsilon$ strengthens robustness against stronger attacks. On CIFAR-10, the same trend holds, $\varepsilon=0.01$ already improves upon the baseline. In our experiments the proposed defense is highly sensitive to the choice of $\varepsilon$. Setting $\varepsilon = 1$  prevents the model from completing the classification task: during training, clean accuracy collapses to $10\%$, indicating that the training set itself is being perturbed to the point where the network can no longer learn any meaningful image features.

\section{Conclusion}\label{Sec:conclusion}
In this work, we developed sampling-based algorithms for solving Sinkhorn DRO, by reformulating it as a bilevel program with multiple infinite-dimensional lower-level subproblems over probability spaces. We proposed a double-loop algorithm (Algorithm~\ref{Alg:solving:SDRO}) with a $\widetilde{\mathcal{O}}(\varrho^{-6})$ complexity guarantee and a more efficient single-loop, mean-field algorithm (Algorithm~\ref{Alg:solving:SDRO:single:improved}) that achieves comparable complexity within an interleaved update framework.
Our analysis extends conventional bilevel optimization techniques to handle infinite-dimensional lower-level subproblems by leveraging tools from sampling theory (such as properties of LSI and SDE-based analysis) to control the error from approximate sampling.

There are several promising directions to explore.
First, our framework could be extended to more general bilevel optimization with infinite-dimensional lower-level variables, such as those arising in meta-learning for generative models (e.g., diffusion or flow-based models).
Second, it is an open question to use iterative sampling-based algorithms to achieve the optimal complexity rate.
Integrating accelerated sampling techniques and developing sharper theoretical analysis could lead to improved guarantees.
Finally, it is interesting to study the Sinkhorn DRO with more general transportation costs and broader classes of loss functions


\section*{Acknowledgement}
The author would like to thank Jindong Jiang for his help in the numerical study to generate Figure~\ref{fig:mnist:cifar}.
The author is partially sponsored by the Presidential Young Fellowship at The Chinese University of Hong Kong, Shenzhen.

\bibliographystyle{informs2014} 
\bibliography{references}

\ifnum\paperversion=1
\ECSwitch
\ECHead{Supplementary for \emph{``An Iterative Sampling Approach for Solving Sinkhorn Distributionally Robust Optimization''}}
\fi

\ifnum\paperversion=2
\appendix 
\newpage
\fi

\if\paperversion2
\addcontentsline{toc}{section}{Appendix} %
\appendix
\onecolumn
\fi

\section{Notations and Definitions}\label{Appendix:notation}

We provide a table consisting of related mathematical notations and a list of useful definitions below.
\begin{table}[htbp]
\centering
\caption{Common Mathematical Notations}
\begin{tabular}{ll}
\toprule
\textbf{Notation} & \textbf{Meaning} \\
\midrule
$\theta$& Decision\\
$z$ & Random vector\\
$d$ & Dimension of random vector $z$\\
$\Reg$ & Entropic regularization parameter of Sinkhorn DRO\\
$\lambda$ & Soft-constrained penalty parameter of Sinkhorn DRO\\
$x^{(i)}$ & $i$-th collected sample\\
$\mu_*^{\theta}$ & Worst-case distribution for fixed decision $\theta$\\
$\mu_*^{\theta,i}$ & Worst-case distribution for $i$-th observation and fixed decision $\theta$\\
$L_{f,1}, L_{f,2}$ & Lipschitz constants for $f_{\theta}(z)$ and $\nabla f_{\theta}(z)$, respectively\\
$\alpha$ & Log-Sobolev inequality constant for the worst-case distribution\\
$\sigma^2$ & Variance of gradient estimator $\nabla f_{\theta}(z)$ for $z\sim \mu_*^{\theta,i}$ and $i\sim \mathrm{Uniform}([n])$\\
$\delta$ & Accuracy level of Algorithm~\ref{Alg:LSD}\\
$\varrho$ & Error tolerance of bilevel optimization Problem~\eqref{Eq:bilevel:inf}\\
$\mu_k^{(i)}$ & Estimated worst-case distribution for $i$-th observation at $k$-th iteration of Algorithm~\ref{Alg:solving:SDRO:single:improved}\\
$z_k^{(i)}$ & Random sample following distribution $\mu_k^{(i)}$\\
$v_{k+1}$ & Constructed gradient estimator at   $k$-th iteration of Algorithm~\ref{Alg:solving:SDRO:single:improved}\\
$r_{k+1}$ & Constructed momentum gradient estimator at  $k$-th iteration of Algorithm~\ref{Alg:solving:SDRO:single:improved}\\
$\beta_0$ & Momentum parameter\\
$\tau$ & Stepsize parameter for one-step update of Langevin dynamics\\
$\eta$ & Stepsize parameter for Algorithm~\ref{Alg:solving:SDRO} or scaled stepsize parameter for Algorithm~\ref{Alg:solving:SDRO:single:improved}\\
$T_{\text{out}}$ & Number of iterations for Algorithm~\ref{Alg:solving:SDRO}\\
$T$ & Number of iterations for Algorithm~\ref{Alg:LSD} or \ref{Alg:solving:SDRO:single:improved}
\\
$I_k$ & Random sampled set of indices from $[n]$ at $k$-th iteration\\
\bottomrule
\end{tabular}
\end{table}

\paragraph{KL-divergence.}
For two probability distributions $\mu$ and $\nu$, 
define the KL-divergence
\[
\mathcal{D}_{\mathrm{KL}}(\mu\|\nu) = 
\int \log\left( 
\frac{\diff \mu(z)}{\diff \nu(z)}
\right)\diff \mu(z).
\]
Suppose $v(\cdot)$ is the density function of $\nu$, we also write $\mathcal{D}_{\mathrm{KL}}(\mu\|v(\cdot))$ to represent $\mathcal{D}_{\mathrm{KL}}(\mu\|\nu)$ for notational simplicity.

\paragraph{Wasserstein Distance.}
For two probability distributions $\mu$ and $\nu$ and $p\in[1,\infty)$, 
define the $p$-Wasserstein distance
\[
\mathcal{W}_p(\mu,\nu)=
\inf_{\gamma\in\Gamma(\mu,\nu)}~
\Big\{
\left(
\bE_{(x,y)\sim\gamma}[\|x-y\|_2^p]
\right)^{1/p}
\Big\},
\]
where $\Gamma(\mu,\nu)$ denotes the set of joint distributions with marginal distributions being $\mu$ and $\nu$, respectively.
Especially, when $p=1$, the Wasserstein distance has the strong dual reformulation:
\[
\mathcal{W}_1(\mu,\nu)=
\sup_{f:~\|f\|_{\mathrm{Lip}}\le 1}~\Big\{
\bE_{z\sim \mu}[f(z)] - \bE_{z\sim \nu}[f(z)]
\Big\}.
\]

\paragraph{Total variation distance.}
For two probability distributions $\mu$ and $\nu$, 
define the total variation distance
\[
\mathrm{TV}(\mu,\nu) = \sup_{f:~\|f\|_{\infty}\le 1}~\Big\{
\bE_{z\sim \mu}[f(z)] - \bE_{z\sim \nu}[f(z)]
\Big\}.
\]

\paragraph{Fisher divergence.}
For two probability distributions $\mu$ and $\nu$, 
define the Fisher divergence
\[
\mathrm{FD}(\mu\|\nu) = \int \left\| 
\nabla_z\log\left( 
\frac{\mu[z]}{\nu[z]}
\right)
\right\|^2\mu[z]\diff z,
\]
where for a distribution $\mu$, $\mu[z]$ denote its density function at $z$.

\paragraph{Entropy.}
For a probability distribution $\mu$, define its (differential) entropy
\[
\mathcal{H}(\mu) = -\int \log(\mu[z])\mu[z]\diff z.
\]

\clearpage
\section{Proofs of Technical Results in Section~\ref{Sec:double:loop:naive}}\label{Appendix:Sec:double:loop:naive}

\begin{proof}[Proof of Proposition~\ref{Proposition:LSI}]
    The first part of this proposition follows from \citep{holley_logarithmic_1987}.
    The second part follows a similar proof idea as in \citep{kim2023symmetric}.
\end{proof}


\begin{proof}[Proof of Theorem~\ref{Theorem:alg:LSD}]
It can be shown that $V_i(\theta,z):=-\frac{1}{\lambda\Reg}f_{\theta}(z) + \frac{1}{2\Reg}\|z - x^{(i)}\|_2^2$ is $L_{V,2}$-smooth with
\[
L_{V,2}:=\frac{1}{\Reg}\left( 1 + \frac{L_{f,2}}{\lambda}
\right).
\]
Also, $\mu_{*}^{i,\theta}[z]:=\frac{\diff\mu_{*}^{i,\theta}}{\diff z}(z)\propto \exp(-V_i(\theta,z))$ satisfies LSI with constant $\alpha$.
Based on \citep[Lemma~3]{vempala2019rapid}, it holds that 
\begin{align*}
D_{\mathrm{KL}}(\mu_T\|\mu_*^{\theta,i})&\le 
e^{-\alpha\tau\Reg T}D_{\mathrm{KL}}(\mu_0\|\mu_*^{\theta,i})
+
\frac{8\tau\Reg dL_{V,2}^2}{\alpha}.
\end{align*}
Since $\tau\le \frac{\alpha\Reg}{4(1 + L_{f,2}/\lambda)^2}\cdot \frac{\delta^2\alpha}{8d}$, we derive
\[
\frac{8\tau\Reg dL_{V,2}^2}{\alpha}\le \frac{\alpha\delta^2}{4}.
\]
Since $T\ge \frac{1}{\alpha\tau\Reg}\log\frac{4D_{\mathrm{KL}}(\mu_0\|\mu_*^{\theta,i})}{\delta^2\alpha}$, we derive
\[
e^{-\alpha\tau\Reg T}D_{\mathrm{KL}}(\mu_0\|\mu_*^{\theta,i})\le \frac{\alpha\delta^2}{4}.
\]
Combining those terms together, we obtain that
\[
D_{\mathrm{KL}}(\mu_T\|\mu_*^{\theta,i})\le \frac{\alpha\delta^2}{2}.
\]
As $\mu_*^{\theta,i}$ satisfies Talagrand's inequality with constant $\alpha$, we derive that
\[
\mathcal{W}_2(\mu_T, \mu_*^{\theta,i})\le \sqrt{
\frac{2D_{\mathrm{KL}}(\mu_T\|\mu_*^{\theta,i})}{\alpha}
}\le \delta.
\]

The proof is completed.
\end{proof}

\begin{proof}[Proof of Lemma~\ref{Lemma:hyperparameter}]
\begin{itemize}
    \item 
For the bias part, it holds that 
\begin{align*}
&\left\|\bE\big[\widehat{\nabla}F(\theta; z)\big]
-
{\nabla}F(\theta)
\right\|_2\\
\le&\bE_{i\sim\mathrm{Uniform}([n])}\left\|\bE_{\tilde{\mu}^{i,\theta}}\big[\nabla_{\theta}f_{\theta}(z)\big]
-
\bE_{{\mu}^{i,\theta}_*}\big[\nabla_{\theta}f_{\theta}(z)\big]
\right\|_2\\
\le&\bE_{i\sim\mathrm{Uniform}([n])}\big[ 
\mathcal{W}_1(\tilde{\mu}^{i,\theta}, {\mu}^{i,\theta}_*)\cdot L_{f,2}
\big]\\
\le&\bE_{i\sim\mathrm{Uniform}([n])}\big[ 
\mathcal{W}_2(\tilde{\mu}^{i,\theta}, {\mu}^{i,\theta}_*)\cdot L_{f,2}
\big]\le \delta\cdot L_{f,2},
\end{align*}
where in the first inequality, we denote by $\tilde{\mu}^{i,\theta}$ the probability distribution such that $\mathcal{W}_2(\tilde{\mu}^{i,\theta}, {\mu}^{i,\theta}_*)\le \delta$.
    \item
For the variance part, it holds that
\[
\mathbb{V}\mathrm{ar}(\widehat{\nabla}F(\theta; z))=
\frac{1}{n}\sum_{i=1}^nA_i,
\]
where
\[
A_i = \bE_{z\sim\tilde{\mu}^{\theta, i}}
\left\| 
\nabla_{\theta}f_{\theta}(z) 
-
\frac{1}{n}\sum_{i=1}^n\bE_{z\sim\tilde{\mu}^{\theta, i}}[\nabla_{\theta}f_{\theta}(z) ]
\right\|_2^2.
\]
Based on the triangular inequality, 
\begin{equation}\label{Eq:tri:Ai}
\begin{multlined}
A_i\le \bE_{z\sim{\mu}_*^{\theta, i}}
\left\| 
\nabla_{\theta}f_{\theta}(z) 
-
\frac{1}{n}\sum_{i=1}^n\bE_{z\sim\tilde{\mu}^{\theta, i}}[\nabla_{\theta}f_{\theta}(z) ]
\right\|_2^2\\
+\left| 
\bE_{z\sim{\mu}_*^{\theta, i} - \tilde{\mu}^{\theta, i}}
\left\| 
\nabla_{\theta}f_{\theta}(z) 
-
\frac{1}{n}\sum_{i=1}^n\bE_{z\sim\tilde{\mu}^{\theta, i}}[\nabla_{\theta}f_{\theta}(z) ]
\right\|_2^2
\right|
\end{multlined}
\end{equation}
Note that in the expression above, we write $\mathbb{E}_{\mu-\nu}[f(z)]$ to denote the difference between $\mathbb{E}_{\mu}[f(z)]$ and $\mathbb{E}_{\nu}[f(z)]$ for generic probability distributions $\mu,\nu$ and measurable function $f(z)$.
For the first component on the right-hand side of \eqref{Eq:tri:Ai}, it can be further bounded as 
\begin{align*}
&2\bE_{z\sim{\mu}_*^{\theta, i}}
\left\| 
\nabla_{\theta}f_{\theta}(z) 
-
\frac{1}{n}\sum_{i=1}^n\bE_{z\sim{\mu}_*^{\theta, i}}[\nabla_{\theta}f_{\theta}(z) ]
\right\|_2^2
+2
\left\| 
\frac{1}{n}\sum_{i=1}^n\bE_{z\sim\tilde{\mu}^{\theta, i} - {\mu}_*^{\theta, i}}[\nabla_{\theta}f_{\theta}(z) ]
\right\|_2^2\\
\le&2\bE_{z\sim{\mu}_*^{\theta, i}}
\left\| 
\nabla_{\theta}f_{\theta}(z) 
-
\frac{1}{n}\sum_{i=1}^n\bE_{z\sim{\mu}_*^{\theta, i}}[\nabla_{\theta}f_{\theta}(z) ]
\right\|_2^2 + 2L_{f,2}^2\delta^2.
\end{align*}
For the second component on the right-hand side of \eqref{Eq:tri:Ai}, from the proof of Theorem~\ref{Theorem:alg:LSD}, it can be shown that 
\[
\mathrm{TV}({\mu}_*^{\theta, i}, \tilde{\mu}^{\theta, i})\le
\sqrt{\frac{1}{2}\mathcal{D}_{\mathrm{KL}}(\tilde{\mu}^{\theta, i}\|{\mu}_*^{\theta, i})}\le 
\frac{\delta\sqrt{\alpha}}{2},
\]
and
\[
\left\| 
\nabla_{\theta}f_{\theta}(z) 
-
\frac{1}{n}\sum_{i=1}^n\bE_{z\sim\tilde{\mu}^{\theta, i}}[\nabla_{\theta}f_{\theta}(z) ]
\right\|_2^2\le 4L_{f,1}^2,
\]
and therefore
\[
\left| 
\bE_{z\sim{\mu}_*^{\theta, i} - \tilde{\mu}^{\theta, i}}
\left\| 
\nabla_{\theta}f_{\theta}(z) 
-
\frac{1}{n}\sum_{i=1}^n\bE_{z\sim\tilde{\mu}^{\theta, i}}[\nabla_{\theta}f_{\theta}(z) ]
\right\|_2^2
\right|
\le 2\sqrt{\alpha}\delta L_{f,1}^2.
\]
Combining these bounds togehter, we arrive that 
\begin{align*}
&\mathbb{V}\mathrm{ar}(\widehat{\nabla}F(\theta; z))\\
\le&
\frac{1}{n}\sum_{i=1}^n\left( 
2\bE_{z\sim{\mu}_*^{\theta, i}}
\left\| 
\nabla_{\theta}f_{\theta}(z) 
-
\frac{1}{n}\sum_{i=1}^n\bE_{z\sim{\mu}_*^{\theta, i}}[\nabla_{\theta}f_{\theta}(z) ]
\right\|_2^2 + 2L_{f,2}^2\delta^2
\right) + 2\sqrt{\alpha}\delta L_{f,1}^2\\
\le&2\sigma^2 + 2L_{f,2}^2\delta^2 + 2\sqrt{\alpha}\delta L_{f,1}^2.
\end{align*}
\item 
The last part of this lemma follows from Theorem~\ref{Theorem:alg:LSD}.
\end{itemize}

\end{proof}

\begin{proof}[Proof of Theorem~\ref{Theorem:complexity:bound}]
Denote by $\widehat{F}(\theta)$ the objective corresponding to the gradient $\bE\widehat{\nabla} F(\theta)$.
We have the following error decomposition:
\begin{align*}
&\bE\|\nabla F(\widehat{\theta})\|_2^2\le 2\bE\|\nabla \widehat{F}(\widehat{\theta})\|_2^2 + 2\bE\|\nabla \widehat{F}(\widehat{\theta}) - \nabla F(\widehat{\theta})\|_2^2\\
\le&2\bE\|\nabla \widehat{F}(\widehat{\theta})\|_2^2 + 2L_{f,2}^2\delta^2.
\end{align*}
As $\widehat{F}(\widehat{\theta})$ is $L_{f,2}$-smooth in $\theta$, according to the convergence analysis in~\citep{bottou2018optimization}, the output in Algorithm~\ref{Alg:solving:SDRO} with constant stepsize $\eta$ satisfies
\[
\bE\|\nabla \widehat{F}(\widehat{\theta})\|_2^2
\le 
\frac{2\big( 
\widehat{F}(\theta_0)-\min_{\theta}\widehat{F}(\theta)
\big)}{\eta T_{\text{out}}} + L_{f,2}\eta\mathbf{V}.
\]
We specify the stepsize $\eta=1/\sqrt{T_{\text{out}}\mathbf{V}}$, then
\[
\bE\|\nabla \widehat{F}(\widehat{\theta})\|_2^2\le 
\frac{\sqrt{\mathbf{V}}
\big( 
2\big( 
\widehat{F}(\theta_0)-\min_{\theta}\widehat{F}(\theta)
\big)
+
L_{f,2}
\big)
}{\sqrt{T_{\text{out}}}}
\]
In order to ensure $\bE\|\nabla F(\widehat{\theta})\|_2^2\le \varrho^2$, we take
\[
2L_{f,2}^2\delta^2\le \frac{1}{2}\varrho^2,\quad 
\frac{\sqrt{\mathbf{V}}
\big( 
2\big( 
\widehat{F}(\theta_0)-\min_{\theta}\widehat{F}(\theta)
\big)
+
L_{f,2}
\big)
}{\sqrt{T_{\text{out}}}}\le \frac{1}{4}\varrho^2.
\]
Thus, we take $\delta = \frac{\varrho}{2L_{f,2}}$ and 
\[
T_{\text{out}}\ge 16\mathbf{V}
\big( 
2\big( 
\widehat{F}(\theta_0)-\min_{\theta}\widehat{F}(\theta)
\big)
+
L_{f,2}
\big)^2\varrho^{-4}.
\]
The proof is completed.
\end{proof}

\clearpage
\section{Proofs of Technical Results in Section~\ref{Sec:proof:cov:ana}}
\label{Appendix:Sec:proof:cov:ana}

\begin{proof}[Proof of Lemma~\ref{Lemma:descent}]
It is easy to verify that $F(\theta)$ is $L_{f,2}$-smooth in $\theta$.
It follows that
\begin{align*}
F(\theta_{k+1})&\le F(\theta_k) + \nabla F(\theta_k)^\top(\theta_{k+1} - \theta_k) + \frac{L_{f,2}}{2}\|\theta_{k+1} - \theta_k\|_2^2\\
&=F(\theta_k) - \eta\tau \nabla F(\theta_k)^\top r_{k+1} + \frac{L_{f,2}\tau^2\eta^2}{2}\|r_{k+1}\|_2^2\\
&=F(\theta_k) + \frac{\eta\tau}{2}\|\nabla F(\theta_k) - r_{k+1}\|_2^2 - \frac{\tau\eta}{2}\|\nabla F(\theta_k)\|_2^2 + \left( 
\frac{L_{f,2}\eta^2\tau^2}{2} - \frac{\eta\tau}{2}
\right)\|r_{k+1}\|_2^2,
\end{align*}
where the first equality is by the relation that $\theta_{k+1} = \theta_k - \eta\tau r_{k+1}$.

Since $\eta\tau\le \frac{1}{2L_{f,2}}$, it holds that $\frac{L_{f,2}\eta^2\tau^2}{2} - \frac{\eta\tau}{2}\le -\frac{\eta\tau}{4}$.
Therefore, the desired result holds.
\end{proof}

\begin{proof}[Proof of Lemma~\ref{Lemma:gradient:diff}]
By the relation $r_{k+1} = (1-\beta_0)r_k + \beta_0v_{k+1}$, it follows that
\begin{equation}\label{Eq:Lemma:4:3:1}
\begin{aligned}
&\bE\big[\|\nabla F(\theta_k) - r_{k+1}\|_2^2\big|\mathcal{G}_{k-1}\big]\\
=&\bE\big[\|\nabla F(\theta_k) - (1-\beta_0)r_k - \beta_0v_{k+1}\|_2^2\big|\mathcal{G}_{k-1}\big]\\
=&\bE\big[\big\|
(1-\beta_0)(\nabla F(\theta_{k-1}) - r_k)
+
(1-\beta_0)(\nabla F(\theta_k) - \nabla F(\theta_{k-1}))
\\
&\qquad+
\beta_0(\nabla F(\theta_k) - \nabla F(\theta_k; \mu_{k}^{(1:n)}))
+
\beta_0(\nabla F(\theta_k; \mu_{k}^{(1:n)}) - v_{k+1})
\big\|_2^2\big|\mathcal{G}_{k-1}\big]\\
=&\big\|
(1-\beta_0)(\nabla F(\theta_{k-1}) - r_k)
+
(1-\beta_0)(\nabla F(\theta_k) - \nabla F(\theta_{k-1}))\\
&\qquad+
\beta_0(\nabla F(\theta_k) - \nabla F(\theta_k; \mu_{k}^{(1:n)}))\big\|_2^2
+\beta_0^2\bE\big[\big\|\nabla F(\theta_k; \mu_{k}^{(1:n)}) - v_{k+1}
\big\|_2^2\big|\mathcal{G}_{k-1}\big],
\end{aligned}
\end{equation}
where the last equality is because, conditioned on $\mathcal{G}_{k-1}$, the term
\[
\mathbf{A}_1:=
(1-\beta_0)(\nabla F(\theta_{k-1}) - r_k)
+
(1-\beta_0)(\nabla F(\theta_k) - \nabla F(\theta_{k-1}))
+
\beta_0(\nabla F(\theta_k) - \nabla F(\theta_k; \mu_{k}^{(1:n)}))
\]
is deterministic, and $\mathbb{E}[\nabla F(\theta_k; \mu_{k}^{(1:n)}) - v_{k+1}\mid \mathcal{G}_{k-1}]=0$.

Furthermore, due to the relation $\|a+b\|_2^2\le (1+\beta)\|a\|_2^2 + (1 + \frac{1}{\beta})\|b\|_2^2$ for any $\beta>0$, it holds that 
\begin{equation}\label{Eq:Lemma:4.3:2}
\begin{aligned}
\|\mathbf{A}_1\|_2^2&\le (1+\beta_0)(1-\beta_0)^2\|\nabla F(\theta_{k-1}) - r_k\|_2^2 \\
&\qquad+ (1 + \frac{1}{\beta_0})\|(1-\beta_0)(\nabla F(\theta_k) - \nabla F(\theta_{k-1}))
+
\beta_0(\nabla F(\theta_k) - \nabla F(\theta_k; \mu_{k}^{(1:n)}))\|_2^2\\
&\le (1+\beta_0)(1-\beta_0)^2\|\nabla F(\theta_{k-1}) - r_k\|_2^2\\
&\quad + 2(1 + \frac{1}{\beta_0})\Big[ 
(1-\beta_0)^2\|\nabla F(\theta_k) - \nabla F(\theta_{k-1})\|_2^2 + \beta_0^2\|\nabla F(\theta_k) - \nabla F(\theta_k; \mu_{k}^{(1:n)})\|_2^2
\Big]\\
&\le(1-\beta_0)\|\nabla F(\theta_{k-1}) - r_k\|_2^2+ \frac{4}{\beta_0}
\|\nabla F(\theta_k) - \nabla F(\theta_{k-1})\|_2^2\\
&\quad+ 4\beta_0\|\nabla F(\theta_k) - \nabla F(\theta_k; \mu_{k}^{(1:n)})\|_2^2
\end{aligned}
\end{equation}
where the second inequality uses the relation $\|a+b\|_2^2\le 2\|a\|_2^2 + 2\|b\|_2^2$, and the last inequality uses the assumption that $\beta_0\in(0,1]$.

By the Lipschitz smoothness assumption of $F(\theta)$, 
\begin{equation}\label{Eq:Lemma:4.3:3}
    \|\nabla F(\theta_k) - \nabla F(\theta_{k-1})\|_2^2\le L_{f,2}^2\|\theta_k - \theta_{k-1}\|_2^2=L_{f,1}^2\tau^2\eta^2\|r_k\|_2^2.
\end{equation}
Combining relations \eqref{Eq:Lemma:4:3:1}, \eqref{Eq:Lemma:4.3:2}, and \eqref{Eq:Lemma:4.3:3}, we obtain the desired result.
\end{proof}

\begin{proof}[Proof of Lemma~\ref{Lemma:KL:div}]
Throughout the proof, we use the stepsize parameters
\begin{equation}
\tau \le \min\left(1, 
\frac{1}{L_{G,2}}, \sqrt{\frac{\lambda}{\Reg L_{f,2}^2}}, \frac{1}{\alpha}, \frac{\alpha}{4L_{G,2}^2}
\right),\qquad \eta\le 1.
\end{equation}
Let us define the auxiliary distribution $\tilde{\mu}_{k+1}^{(i)}$ as the law of $\tilde{z}_{k+1}^{(i)}$, where
\[
\tilde{z}_{k+1}^{(i)} = z_k^{(i)} - \tau\left( 
    -\frac{\nabla_zf_{\theta_k}(z_k^{(i)})}{\lambda} + (z_k^{(i)} - x^{(i)})
    \right) + \sqrt{2\tau\Reg}\zeta_k^{(i)},\quad 
    \mathrm{Law}(z_k^{(i)})=\mu_{k}^{(i)}.
\]
Therefore, it holds that 
\begin{equation}\label{Eq:KL:mu:mu:sta}
\begin{aligned}
&\bE\big[ 
\mathcal{D}_{\mathrm{KL}}(\mu_{k+1}^{(i)} \| \mu_*^{i, \theta_{k+1}})\big| \mathcal{G}_{k}
\big]\\=&
\left( 
1 - \frac{|I_k|}{n}
\right)\bE\big[ \mathcal{D}_{\mathrm{KL}}(\mu_{k}^{(i)} \| \mu_*^{i, \theta_{k+1}})\big| \mathcal{G}_{k}
\big]
+
\frac{|I_k|}{n}\bE\big[ \mathcal{D}_{\mathrm{KL}}(\tilde{\mu}_{k+1}^{(i)} \| \mu_*^{i, \theta_{k+1}})\big| \mathcal{G}_{k}
\big],
\end{aligned}
\end{equation}
where $\bE[\cdot\mid\mathcal{G}_k]$ refers to the expected value with respect to the randomness conditioned on $\mathcal{G}_k$, i.e., taking the expected value over the randomness of the sampling indices $I_k$.
We upper bound each of the component on the right-hand-side in the following.
\paragraph{Bounding $\bE\big[ \mathcal{D}_{\mathrm{KL}}(\mu_{k}^{(i)} \| \mu_*^{i, \theta_{k+1}})\big| \mathcal{G}_{k}
\big]$.}
Define $\theta_{k+1}$ as the output at time $\tau$ of the dynamics
\begin{equation}
\vartheta_t = \theta_k - t\eta r_{k+1},\quad 
t\in[0,\tau].
\label{Eq:expression:vartheta}
\end{equation}
For a given probability measure $\mu$, we write $\mu[z]$ for the density function $\frac{\diff \mu(z)}{\diff z}$.
For fixed $I_k$, by the chain rule, it holds that 
\begin{equation}
\label{Eq:diff:t:div}
\begin{aligned}
&\frac{\diff}{\diff t}\mathcal{D}_{\mathrm{KL}}(\mu_{k}^{(i)} \| \mu_*^{i, \vartheta_{t}})=-
\int \frac{\mu_{k}^{(i)}[z]}{\mu_*^{i, \vartheta}[z]}\frac{\diff}{\diff t}\left( 
\mu_*^{i, \vartheta_{t}}[z]
\right)\diff z.
\end{aligned}
\end{equation}
Define the function $V_i(\theta,z):=-\frac{1}{\lambda\Reg}f_{\theta}(z) + \frac{1}{2\Reg}\|z - x^{(i)}\|_2^2$ and scalar $Z_{i,\theta} := \int V_i(\theta,z)\diff z$.
Then the density $\mu_*^{i, \vartheta_{t}}[z] = \exp(-V_i(\vartheta_t,z)) / Z_{i,\vartheta_{t}}$. By the chain rule, it holds that
\begin{equation}
\begin{aligned}
&\frac{\diff}{\diff t}\left( 
\mu_*^{i, \vartheta_{t}}[z]
\right)=\left\langle 
\frac{\diff}{\diff\vartheta_{t}}\left(
\mu_*^{i, \vartheta_{t}}[z]
\right),~
\frac{\diff \vartheta_{t}}{\diff t}
\right\rangle\\
=&
\left\langle 
-\nabla_1V_i(\vartheta_t,z)\cdot \mu_*^{i, \vartheta_{t}}[z] + 
\bE_{z\sim \mu_*^{i, \vartheta_{t}}}\big[ 
\nabla_1V_i(\vartheta_t,z)
\big]\cdot \mu_*^{i, \vartheta_{t}}[z],~
-\eta r_{k+1}
\right\rangle
\\
=&\frac{\eta}{\lambda\Reg}\mu_*^{i, \vartheta_{t}}[z]\cdot \langle 
\nabla_{\theta}f_{\vartheta_{t}}(z) + \bE_{z\sim \mu_*^{i, \vartheta_{t}}}\big[ 
f_{\vartheta_{t}}(z)\big],~~r_{k+1}
\rangle.
\end{aligned}\label{Eq:diff:t:mu}
\end{equation}
Substituting \eqref{Eq:diff:t:mu} into \eqref{Eq:diff:t:div}, we obtain that 
\begin{align*}
&\frac{\diff}{\diff t}\mathcal{D}_{\mathrm{KL}}(\mu_{k}^{(i)} \| \mu_*^{i, \vartheta_{t}})
=
-\frac{\eta}{\lambda\Reg}\int \langle 
\nabla_{\theta}f_{\vartheta_{t}}(z) + \bE_{z\sim \mu_*^{i, \vartheta_{t}}}\big[ 
f_{\vartheta_{t}}(z)\big],~~r_{k+1}
\rangle\cdot \mu_k^{(i)}[z]\diff z\\
\le&\frac{\eta}{\lambda\Reg}(L_{f,1}^2 + \|r_{k+1}\|_2^2).
\end{align*}

By the fundamental theorem of calculus, it holds that
\[
\mathcal{D}_{\mathrm{KL}}(\mu_{k}^{(i)} \| \mu_*^{i, \vartheta_{t}})\le 
\mathcal{D}_{\mathrm{KL}}(\mu_{k}^{(i)} \| \mu_*^{i, \theta_k}) + \frac{\eta t(L_{f,1}^2 + \|r_{k+1}\|_2^2)}{\lambda\Reg},\quad \forall t\in[0,\tau].
\]
Therefore, we obtain that 
\begin{equation}
\bE\big[ \mathcal{D}_{\mathrm{KL}}(\mu_{k}^{(i)} \| \mu_*^{i, \theta_{k+1}})\big| \mathcal{G}_{k}
\big]
\le 
\mathcal{D}_{\mathrm{KL}}(\mu_{k}^{(i)} \| \mu_*^{i, \theta_k}) + 
\frac{\eta\tau L_{f,1}^2}{\lambda\Reg} + \frac{\eta\tau}{\lambda\Reg}\bE\big[ \|r_{k+1}\|_2^2\big| \mathcal{G}_{k}
\big].\label{Eq:upper:lemma:1}
\end{equation}

\paragraph{Bounding $\bE\big[ \mathcal{D}_{\mathrm{KL}}(\tilde{\mu}_{k+1}^{(i)} \| \mu_*^{i, \theta_{k+1}})\big| \mathcal{G}_{k}
\big]$.}
Define 
\begin{equation}
G_i(\theta,z) = -\frac{f_{\theta}(z)}{\lambda} + \frac{1}{2}\|z - x^{(i)}\|_2^2.\label{Eq:V:z:theta}
\end{equation}
It can be shown that $G_i(z,\theta)$ is $L_{G,2}$-smooth in $z$ with 
\begin{equation}
L_{G,2}:=1 + \frac{1}{\lambda}L_{f,2}.
\end{equation}
Let us define $\rho_0:={\mu}_{k}^{(i)}$ with $z_0\sim\rho_0$, and
\[
\diff z_t = -\nabla_2G_i(\theta_k,z_0)\diff t + \sqrt{2\Reg}\diff\mathbf{B}_t,\quad 
\mathrm{Law}(z_t)=\rho_t.
\]
Then, $\tilde{\mu}_{k+1}^{(i)}$ has the same distribution of the output of the SDE above at time $\tau$.
Recall we define $\theta_{k+1}$ as the output at time $\tau$ of \eqref{Eq:expression:vartheta}.
Following the same calculation procedure as in \citep[Appendix~B]{marion2024implicit}, we find the time derivative
\begin{equation}\label{Eq:time:div:KL}
\begin{aligned}
&\frac{\diff}{\diff t}\mathcal{D}_{\mathrm{KL}}(\rho_t \| \mu_*^{i, \vartheta_t})
=-\mathrm{FD}(\rho_t\|\mu_*^{i, \vartheta_t})\\
&\qquad + 
\int \left\langle
\nabla\log\left( 
\frac{\rho_t[z_t]}{\mu_*^{i, \vartheta_t}[z_t]}
\right),~~
\nabla_2G_i(\theta_k, z_t)
-
\bE_{z_0\sim\rho_{0\mid t}}[\nabla_2G_i(\theta_k,z_0)\mid z_t]
\right\rangle \rho_t[z_t]\diff z_t\\
&\qquad + 
\int\left\langle
\nabla\log\left( 
\frac{\rho_t[z_t]}{\mu_*^{i, \vartheta_t}[z_t]}
\right),~~
\nabla_2G_i(\vartheta_t, z_t) - 
\nabla_2G_i(\theta_k, z_t)
\right\rangle\rho_t[z_t] \diff z_t\\
&\qquad + \frac{\eta}{\lambda\Reg}
\int 
\left[ 
\inp{\nabla f_{\vartheta_{t}}(z_t)}{r_{k+1}}
-
\bE_{\mu_*^{i, \vartheta_{t}}}[\inp{\nabla f_{\vartheta_{t}}(z_t)}{r_{k+1}}]
\right]\rho_t[z_t]\diff z_t.
\end{aligned}    
\end{equation}
Denote by the second, third and fourth components of the right-hand-side above as $\mathbf{A}_2, \mathbf{A}_3, \mathbf{A}_4$, respectively.
We provide upper bound on those expressions below.
\begin{itemize}
    \item By the law of total expectation, it holds that
\begin{equation*}
\begin{aligned}
\mathbf{A}_2=&\int \left\langle
\nabla\log\left( 
\frac{\rho_t[z_t]}{\mu_*^{i, \vartheta_t}[z_t]}
\right),~~
\nabla_2G_i(\theta_k, z_t)
-
\nabla_2G_i(\theta_k,z_0)
\right\rangle \rho_{0,t}[z_0, z_t]\diff(z_0,z_t)\\
\le&\int 
\Bigg(
\left\| 
\nabla_2G_i(\theta_k, z_t)
-
\nabla_2G_i(\theta_k,z_0)
\right\|_2^2 \\
&\qquad\qquad\qquad\qquad\qquad+
\frac{1}{4}\left\|
\nabla\log\left( 
\frac{\rho_t[z_t]}{\mu_*^{i, \vartheta_t}[z_t]}
\right)
\right\|_2^2
\Bigg)
 \rho_{0,t}[z_0, z_t]
\diff(z_0,z_t)\\
\le& L_{G,2}^2
\bE_{(z_0,z_t)\sim\rho_{0,t}}\|z_t - z_0\|_2^2 + 
\frac{1}{4}\mathrm{FD}(\rho_t\|\mu_*^{i, \vartheta_t}),
\end{aligned}
\end{equation*}
where the first inequality is based on the relation $\inp{a}{b}\le \|a\|_2^2 + \frac{1}{4}\|b\|_2^2$, and the second inequality is because $G_i(\theta,z)$ is $L_{G,2}$-smooth in $z$.
Based on Lemma~\ref{Lemma:norm:2:diff}, it further holds that
\begin{equation}\label{Eq:expr:A2}
\mathbf{A}_2\le 
\frac{4t^2L_{G,2}^4}{\alpha}\mathcal{D}_{\mathrm{KL}}(\rho_0\| \mu_*^{i, \theta_k}) + 2t^2dL_{G,2}^3\Reg + 2t\Reg dL_{G,2}^2
 + \frac{1}{4}\mathrm{FD}(\rho_t\|\mu_*^{i, \vartheta_t}).
\end{equation}
\item
Using the relation $\inp{a}{b}\le \|a\|_2^2 + \frac{1}{4}\|b\|_2^2$, again, it holds that
\begin{equation}\label{Eq:expr:A3}
\begin{aligned}
\mathbf{A}_3&\le \int
\left(
\left\| 
\nabla_2G_i(\vartheta_t, z_t) - 
\nabla_2G_i(\theta_k, z_t)
\right\|_2^2 
+
\frac{1}{4}\left\|
\nabla\log\left( 
\frac{\rho_t[z_t]}{\mu_*^{i, \vartheta_t}[z_t]}
\right)
\right\|_2^2\right)
\rho_t[z_t] \diff z_t\\
\le&\frac{L_{f,2}^2}{\lambda^2}\|\vartheta_t - \theta_k\|_2^2 + \frac{1}{4}\mathrm{FD}(\rho_t\|\mu_*^{i, \vartheta_t})\\
=&\frac{L_{f,2}^2t^2\eta^2}{\lambda^2}\|r_{k+1}\|_2^2 +\frac{1}{4}\mathrm{FD}(\rho_t\|\mu_*^{i, \vartheta_t}),
\end{aligned}
\end{equation}
where the second inequality is because 
\[
\|\nabla_2G_i(\theta,z) - \nabla_2G_i(\theta',z)\|_2=\frac{1}{\lambda}\|\nabla_z f_{\theta}(z) - \nabla_zf_{\theta'}(z)\|_2\le L_{f,2}/\lambda.
\]
\item
Using the relation $\inp{a}{b}\le \frac{1}{2}\|a\|_2^2 + \frac{1}{2}\|b\|_2^2$, it holds that 
\begin{equation}
\mathbf{A}_4\le \frac{\eta(L_{f,1}^2 + \|r_{k+1}\|_2^2)}{\lambda\Reg}.
\label{Eq:expr:A4}
\end{equation}
\end{itemize}

As $\mu_*^{i, \vartheta_t}$ satisfies $\alpha$-LSI, 
\begin{equation}
\mathcal{D}_{\mathrm{KL}}(\rho_t\|\mu_*^{i, \vartheta_t})
\le
\frac{1}{2\alpha}
\mathrm{FD}(\rho_t\|\mu_*^{i, \vartheta_t}).\label{Eq:expr:LSI}
\end{equation}
Sustituting \eqref{Eq:expr:A2}, \eqref{Eq:expr:A3}, \eqref{Eq:expr:A4}, and \eqref{Eq:expr:LSI} into \eqref{Eq:time:div:KL}, we obtain 
\begin{align*}
&\frac{\diff}{\diff t}\mathcal{D}_{\mathrm{KL}}(\rho_t \| \mu_*^{i, \vartheta_t})\\
\le&
-\alpha \mathcal{D}_{\mathrm{KL}}(\rho_t \| \mu_*^{i, \vartheta_t})
+
\frac{4L_{G,2}^4t^2}{\alpha}\mathcal{D}_{\mathrm{KL}}(\rho_0\| \mu_*^{i, \theta_k}) \\
&\qquad\qquad\qquad\qquad+ 2dL_{G,2}^3\Reg t^2 + 2\Reg dL_{G,2}^2t
+
\frac{L_{f,2}^2\eta^2t^2}{\lambda^2}\|r_{k+1}\|^2 + \frac{\eta(L_{f,1}^2 + \|r_{k+1}\|_2^2)}{\lambda\Reg}\\
\le&-\alpha \mathcal{D}_{\mathrm{KL}}(\rho_t \| \mu_*^{i, \vartheta_t})
+
\frac{4L_{G,2}^4\tau^2}{\alpha}\mathcal{D}_{\mathrm{KL}}(\rho_0\| \mu_*^{i, \theta_k}) \\
&\qquad\qquad\qquad\qquad+ 2dL_{G,2}^3\Reg\tau^2 + 2\Reg dL_{G,2}^2\tau
+
\frac{L_{f,2}^2\eta^2\tau^2}{\lambda^2}\|r_{k+1}\|^2 + \frac{\eta(L_{f,1}^2 + \|r_{k+1}\|_2^2)}{\lambda\Reg}.
\end{align*}
Since $\tau\le \frac{1}{L_{G,2}}, \eta\le 1$, and $\frac{L_{f,2}^2\tau^2}{\lambda}\le \frac{1}{\Reg}$, we further obtain
\begin{align*}
&\frac{\diff}{\diff t}\mathcal{D}_{\mathrm{KL}}(\rho_t \| \mu_*^{i, \vartheta_t})\\
\le&-\alpha \mathcal{D}_{\mathrm{KL}}(\rho_t \| \mu_*^{i, \vartheta_t})
+
\frac{4L_{G,2}^4\tau^2}{\alpha}\mathcal{D}_{\mathrm{KL}}(\rho_0\| \mu_*^{i, \theta_k}) \\
&\qquad\qquad\qquad\qquad+ 4\Reg dL_{G,2}^2\tau
+
\frac{L_{f,2}^2\eta^2\tau^2}{\lambda^2}\|r_{k+1}\|^2 + \frac{\eta(L_{f,1}^2 + \|r_{k+1}\|_2^2)}{\lambda\Reg}\\
\le&-\alpha \mathcal{D}_{\mathrm{KL}}(\rho_t \| \mu_*^{i, \vartheta_t})
+
\frac{4L_{G,2}^4\tau^2}{\alpha}\mathcal{D}_{\mathrm{KL}}(\rho_0\| \mu_*^{i, \theta_k})
+ 4\Reg dL_{G,2}^2\tau
+
 \frac{\eta(L_{f,1}^2 + 2\|r_{k+1}\|_2^2)}{\lambda\Reg}.
\end{align*}
Define constants
\begin{align*}
\mathbf{C}_1&=\frac{4L_{G,2}^4\tau^2}{\alpha}\mathcal{D}_{\mathrm{KL}}(\rho_0\| \mu_*^{i, \theta_k}),\\
\mathbf{C}_2&=4\Reg dL_{G,2}^2\tau
+
 \frac{\eta(L_{f,1}^2 + 2\|r_{k+1}\|_2^2)}{\lambda\Reg}.
\end{align*}
By Gr{\"o}nwall’s inequality in Lemma~\ref{Grow:ineq}, we find
\begin{align*}
&\mathcal{D}_{\mathrm{KL}}(\rho_t \| \mu_*^{i, \vartheta_t})
\\\le& 
\frac{(\mathbf{C}_1+\mathbf{C}_2)e^{-\alpha t}(e^{\alpha t} - 1)}{\alpha} + 
\mathcal{D}_{\mathrm{KL}}(\rho_0\| \mu_*^{i, \theta_k})e^{-\alpha t}\\
\le&2(\mathbf{C}_1+\mathbf{C}_2)\tau e^{-\alpha\tau} + \mathcal{D}_{\mathrm{KL}}(\rho_0\| \mu_*^{i, \theta_k})e^{-\alpha t}\\
=&2\mathbf{C}_2\tau e^{-\alpha\tau} + \left( 
1 + \frac{8\tau^3L_{G,2}^4}{\alpha}
\right)\mathcal{D}_{\mathrm{KL}}(\rho_0\| \mu_*^{i, \theta_k})e^{-\alpha t}\\
\le&2\mathbf{C}_2\tau e^{-\alpha\tau} + \left( 
1 + \frac{\alpha\tau}{2}
\right)\mathcal{D}_{\mathrm{KL}}(\rho_0\| \mu_*^{i, \theta_k})e^{-\alpha\tau},
\end{align*}
where the second inequality is because $\alpha t\le \alpha\tau\le 1$ and $e^{\alpha t}\le 1+2\alpha t$ and the last inequality is because $\frac{8\tau^3L_{G,2}^4}{\alpha}\le \frac{\alpha\tau}{2}$.
By taking the time $t=\tau$, we finally obtain that
\[
\mathcal{D}_{\mathrm{KL}}(\tilde{\mu}_{k+1}^{(i)} \| \mu_*^{i, \theta_{k+1}})
\le 
2\mathbf{C}_2\tau e^{-\alpha\tau} + \left( 
1 + \frac{\alpha\tau}{2}
\right)\mathcal{D}_{\mathrm{KL}}(\mu_k^{(i)}\| \mu_*^{i, \theta_k})e^{-\alpha\tau}.
\]
Therefore, 
\begin{equation}\label{Eq:upper:lemma:2}
\begin{aligned}
\bE\big[ \mathcal{D}_{\mathrm{KL}}(\tilde{\mu}_{k+1}^{(i)} \| \mu_*^{i, \theta_{k+1}})\big| \mathcal{G}_{k}
\big]&\le \left( 
1 + \frac{\alpha\tau}{2}
\right)e^{-\alpha \tau}\mathcal{D}_{\mathrm{KL}}(\mu_k^{(i)}\| \mu_*^{i, \theta_k})
+
2\tau e^{-\alpha\tau}\left( 
4\Reg dL_{G,2}^2\tau + \frac{\eta L_{f,1}^2}{\lambda\Reg}
\right)\\
&\qquad\qquad\qquad\qquad\qquad\qquad + \frac{4\eta\tau}{\lambda\Reg}\cdot e^{-\alpha\tau}\cdot \bE\big[ \|r_{k+1}\|_2^2\big| \mathcal{G}_{k}
\big]\\
&\le \left( 
1 - \frac{\alpha\tau}{4}
\right)\mathcal{D}_{\mathrm{KL}}(\mu_k^{(i)}\| \mu_*^{i, \theta_k})
+
2\tau e^{-\alpha\tau}\left( 
4\Reg dL_{G,2}^2\tau + \frac{\eta L_{f,1}^2}{\lambda\Reg}
\right)\\
&\qquad\qquad\qquad\qquad\qquad\qquad + \frac{4\eta\tau}{\lambda\Reg}\cdot e^{-\alpha\tau}\cdot \bE\big[ \|r_{k+1}\|_2^2\big| \mathcal{G}_{k}
\big],
\end{aligned}
\end{equation}
where the second inequality is due to $\frac{\alpha\tau}{2}\le \frac{1}{2}$ and 
$\left( 
1 + \frac{\alpha\tau}{2}
\right)e^{-\alpha \tau}\le e^{-\alpha\tau/2}\le 1-\frac{\alpha\tau}{4}$.


\paragraph{Bounding $\bE\big[ \mathcal{D}_{\mathrm{KL}}({\mu}_{k+1}^{(i)} \| \mu_*^{i, \theta_{k+1}})\big| \mathcal{G}_{k}
\big]$.}
Substituting \eqref{Eq:upper:lemma:1} and \eqref{Eq:upper:lemma:2} into \eqref{Eq:KL:mu:mu:sta}, we obtain that 
\begin{align*}
&\bE\big[ 
\mathcal{D}_{\mathrm{KL}}(\mu_{k+1}^{(i)} \| \mu_*^{i, \theta_{k+1}})\big| \mathcal{G}_{k}
\big]\\=&
\left( 
1 - \frac{|I_k|}{n}
\right)\bE\big[ \mathcal{D}_{\mathrm{KL}}(\mu_{k}^{(i)} \| \mu_*^{i, \theta_{k+1}})\big| \mathcal{G}_{k}
\big]
+
\frac{|I_k|}{n}\bE\big[ \mathcal{D}_{\mathrm{KL}}(\tilde{\mu}_{k+1}^{(i)} \| \mu_*^{i, \theta_{k+1}})\big| \mathcal{G}_{k}
\big]\\
\le&
\left( 
1 - \frac{|I_k|}{n}
\right)
\left(
\mathcal{D}_{\mathrm{KL}}(\mu_{k}^{(i)} \| \mu_*^{i, \theta_k}) + 
\frac{\eta\tau L_{f,1}^2}{\lambda\Reg} + \frac{\eta\tau}{\lambda\Reg}\bE\big[ \|r_{k+1}\|_2^2\big| \mathcal{G}_{k}
\big]
\right) 
\\
&\quad+
\frac{|I_k|}{n}\Bigg( 
\left( 
1 - \frac{\alpha\tau}{4}
\right)\mathcal{D}_{\mathrm{KL}}(\mu_k^{(i)}\| \mu_*^{i, \theta_k})+
2\tau e^{-\alpha\tau}\left( 
4\Reg dL_{G,2}^2\tau + \frac{\eta L_{f,1}^2}{\lambda\Reg}
\right)
\\
&\qquad\qquad\qquad\qquad\qquad\qquad\qquad\qquad\qquad\qquad\qquad\qquad+
\frac{4\eta\tau}{\lambda\Reg}\cdot e^{-\alpha\tau}\cdot \bE\big[ \|r_{k+1}\|_2^2\big| \mathcal{G}_{k}
\big]
\Bigg)\\
\le&\left( 
1 - \frac{\alpha\tau|I_k|}{4n}
\right)\mathcal{D}_{\mathrm{KL}}(\mu_k^{(i)}\| \mu_*^{i, \theta_k}) + \frac{3\eta\tau L_{f,1}^2}{\lambda\Reg} + \frac{8\tau^2\Reg dL_{G,2}^2|I_k|}{n} + \frac{5\eta\tau}{\lambda\Reg}\bE\big[ \|r_{k+1}\|_2^2\big| \mathcal{G}_{k}
\big].
\end{align*}
Taking the full expectation, we obtain that 
\begin{align*}
&\bE\big[ 
\mathcal{D}_{\mathrm{KL}}(\mu_{k+1}^{(i)} \| \mu_*^{i, \theta_{k+1}})
\big]\\
\le&\left( 
1 - \frac{\alpha\tau|I_k|}{4n}
\right)\bE\big[ 
\mathcal{D}_{\mathrm{KL}}(\mu_{k}^{(i)} \| \mu_*^{i, \theta_{k}})
\big] + \frac{3\eta\tau L_{f,1}^2}{\lambda\Reg} + \frac{8\tau^2\Reg dL_{G,2}^2|I_k|}{n} + \frac{5\eta\tau}{\lambda\Reg}\bE\big[ \|r_{k+1}\|_2^2
\big].
\end{align*}
Taking summation over $k=0,\ldots,T$, we have that 
\begin{align*}
&\sum_{k=0}^T\bE\big[ 
\mathcal{D}_{\mathrm{KL}}(\mu_{k}^{(i)} \| \mu_*^{i, \theta_{k}})
\big]\\
\le&\frac{4n}{\alpha\tau|I_k|}
\mathcal{D}_{\mathrm{KL}}(\mu_{0}^{(i)} \| \mu_*^{i, \theta_{0}})
 + \frac{12\eta L_{f,1}^2Tn}{\lambda\Reg\alpha|I_k|} + \frac{32\tau\Reg dL_{G,2}^2T}{\alpha} + \frac{20\eta n}{\lambda\Reg\alpha|I_k|}\sum_{k=0}^{T-1}\bE\big[ \|r_{k+1}\|_2^2
\big].
\end{align*}
The proof is complete.
\end{proof}

Before showing the proof of Theorem~\ref{Theorem:convergence:final}, we provide the following error bounds:
\begin{itemize}
    \item
Based on the expression in \eqref{Eq:F:aux}, it holds that
\begin{align*}
&\left\| \nabla F(\theta_k) - \nabla F(\theta_k; \mu_{k}^{(1:n)})\right\|_2
\le\frac{1}{n}\sum_{i\in[n]}\left\|
\bE_{\mu_{k}^{(i)} - \mu_*^{i, \theta_k}}[\nabla f_{\theta_k}(z)]
\right\|_2\\
\le&\frac{1}{n}\sum_{i\in[n]}L_{f,1}\cdot\mathrm{TV}(\mu_{k}^{(i)}, \mu_*^{i, \theta_k})\\
\le&\frac{L_{f,1}}{n}\sum_{i\in[n]}\sqrt{
\frac{1}{2}
\mathcal{D}_{\mathrm{KL}}(\mu_{k}^{(i)}, \mu_*^{i, \theta_k})
},
\end{align*}
which further implies
\[
\left\| \nabla F(\theta_k) - \nabla F(\theta_k; \mu_{k}^{(1:n)})\right\|_2^2\le 
\frac{L_{f,1}^2}{2n}\sum_{i\in[n]}\mathcal{D}_{\mathrm{KL}}(\mu_{k}^{(i)}, \mu_*^{i, \theta_k}).
\]
\item
Since $\mathbb{V}\mathrm{ar}(v_{k+1}\mid \mathcal{G}_{k-1})\le \bE[\|v_{k+1}\|_2^2\mid \mathcal{G}_{k-1}]\le L_{f,1}^2$, it holds that
\[
\bE\left[\left\|\nabla F(\theta_k; \mu_{k}^{(1:n)}) - v_{k+1}\right\|_2^2\middle|\mathcal{G}_{k-1}\right]
\le \frac{L_{f,1}^2}{|I_k|}.
\]
\item 
Substituting the error bounds above into \eqref{Eq:gradient:diff:lemma}, we obtain
\begin{equation*}
\begin{multlined}
\bE\big[\|\nabla F(\theta_k) - r_{k+1}\|_2^2\big|\mathcal{G}_{k-1}\big]
\le
(1-\beta_0)\|\nabla F(\theta_{k-1}) - r_{k}\|_2^2 
+ 
\frac{4L_{f,2}^2\tau^2\eta^2}{\beta_0}\|r_k\|_2^2 \\+ 
\frac{2\beta_0L_{f,1}^2}{n}\sum_{i\in[n]}\mathcal{D}_{\mathrm{KL}}(\mu_{k}^{(i)}, \mu_*^{i, \theta_k})
+ \frac{\beta_0^2L_{f,1}^2}{|I_k|}.
\end{multlined}
\end{equation*}
Consequently, it holds that
\begin{equation}\label{Eq:before:Thm:4.5}
\begin{aligned}
&\sum_{k=0}^T\bE\big[\|\nabla F(\theta_k) - r_{k+1}\|_2^2\big]
\le\frac{1}{\beta_0}\bE\big[\|\nabla F(\theta_0) - r_{1}\|_2^2\big] \\
&\quad+ \frac{4L_{f,2}^2\tau^2\eta^2}{\beta_0^2}\sum_{k=1}^T\|r_k\|_2^2 + \frac{2L_{f,1}^2}{n}\sum_{i\in[n]}\sum_{k=1}^T\mathcal{D}_{\mathrm{KL}}(\mu_{k}^{(i)}, \mu_*^{i, \theta_k})
+
\frac{\beta_0L_{f,1}^2T}{|I_k|}.
\end{aligned}
\end{equation}
\end{itemize}


\begin{proof}[Proof of Theorem~\ref{Theorem:convergence:final}]
The average over gradient norms can be bounded as
\begin{align*}
&\frac{1}{T+1}\sum_{k=0}^{T}\bE\big[\|\nabla F(\theta_k)\|_2^2\big]\\
\le&\frac{2\bE[F(\theta_0) - F(\theta_{*})]}{\eta\tau T} + 
\frac{1}{T}\sum_{k=0}^T\bE\big[\|\nabla F(\theta_k) - r_{k+1}\|_2^2\big]-\frac{1}{2T}\sum_{k=0}^T\bE[\|r_{k+1}\|_2^2]\\
\le&\frac{2\bE[F(\theta_0) - F(\theta_{*})]}{\eta\tau T} + \frac{
\bE\big[\|\nabla F(\theta_0) - r_1\|_2^2\big]
}{\beta_0T} + \frac{4L_{f,2}^2\tau^2\eta^2}{\beta_0^2T}\sum_{k=0}^T\bE\big[\|r_k\|_2^2\big]\\
&\qquad\qquad + \frac{2L_{f,1}^2}{nT}\sum_{i\in[n]}\sum_{k=1}^T\bE\left[\mathcal{D}_{\mathrm{KL}}(\mu_{k}^{(i)}, \mu_*^{i, \theta_k})\right] + \frac{\beta_0L_{f,2}^2}{|I_k|}-\frac{1}{2T}\sum_{k=0}^T\bE[\|r_{k+1}\|_2^2],
\end{align*}
wjere the first inequality is based on Lemma~\ref{Lemma:descent} and we denote $\theta_*$ the optimal solution of $\min_{\theta}~F(\theta)$, and the second inequality is by \eqref{Eq:before:Thm:4.5}.

Based on Lemma~\ref{Lemma:KL:div}, we further obtain that 
\begin{align*}
&\frac{1}{T+1}\sum_{k=0}^{T}\bE\big[\|\nabla F(\theta_k)\|_2^2\big]\\
\le&\frac{2\bE[F(\theta_0) - F(\theta_{*})]}{\eta\tau T} + \frac{
\bE\big[\|\nabla F(\theta_0) - r_1\|_2^2\big]
}{\beta_0T} + \frac{4L_{f,2}^2\tau^2\eta^2}{\beta_0^2T}\sum_{k=0}^T\bE\big[\|r_k\|_2^2\big]
+ 2L_{f,2}^2\Bigg[\\
&\quad\frac{4}{\alpha\tau|I_k|T}
\sum_{i=1}^n
\mathcal{D}_{\mathrm{KL}}(\mu_{0}^{(i)} \| \mu_*^{i, \theta_{0}})
 + \frac{12\eta L_{f,1}^2n}{\lambda\Reg\alpha|I_k|} 
+
\frac{32\tau\Reg dL_{G,2}^2}{\alpha}
 + \frac{20\eta n}{\lambda\Reg\alpha |I_k|T}\sum_{t=0}^{T-1}\bE\big[ \|r_{k+1}\|_2^2
\big]\Bigg]\\
&+ \frac{\beta_0L_{f,2}^2}{|I_k|}-\frac{1}{2T}\sum_{k=0}^T\bE[\|r_{k+1}\|_2^2]\\
=&\frac{1}{T}\Bigg\{
\frac{2\bE[F(\theta_0) - F(\theta_{*})]}{\eta\tau} + \frac{
\bE\|\nabla F(\theta_0) - z_1\|_2^2
}{\beta_0} +\frac{8L_{f,2}^2}{\alpha\tau|I_k|}\sum_{i=1}^n\mathcal{D}_{\mathrm{KL}}(\mu_{0}^{(i)} \| \mu_*^{i, \theta_{0}})
\Bigg\}\\
&+
\frac{\sum_{k=0}^{T}\bE\big[ \|r_{k}\|_2^2
\big]}{T}\Bigg[
\frac{40\eta L_{f,2}^2n}{\lambda\Reg\alpha|I_k|} + \frac{4L_{f,2}^2\tau^2\eta^2}{\beta_0^2}
-\frac{1}{2}
\Bigg]\\
&+\frac{\beta_0L_{f,2}^2}{|I_k|} + \frac{64\tau\Reg dL_{G,2}^2L_{f,2}^2}{\alpha} 
+\frac{24\eta L_{f,1}^2L_{f,2}^2n}{\lambda\Reg\alpha|I_k|}.
\end{align*}
By setting
\begin{equation}
\beta_0\le \frac{\varrho^2|I_k|}{6L_{f,2}^2},\quad 
\tau\le \frac{\varrho^2\alpha}{384\Reg dL_{G,2}^2L_{f,2}^2},\quad 
\eta\le \frac{\varrho^2\lambda\Reg\alpha|I_k|}{144L_{f,1}^2L_{f,2}^2n}, \label{Eq:beta:0}
\end{equation}
it holds that
\[
\frac{\beta_0L_{f,2}^2}{|I_k|} + \frac{64\tau\Reg dL_{G,2}^2L_{f,2}^2}{\alpha} 
+\frac{24\eta L_{f,1}^2L_{f,2}^2n}{\lambda\Reg\alpha|I_k|}
\le \frac{\varrho^2}{6} + \frac{\varrho^2}{6} + \frac{\varrho^2}{6}\le \frac{\varrho^2}{2}.
\]
By setting the number of iterations
\begin{equation}
T\ge  \max\left( 
\frac{12\bE[F(\theta_0) - F(\theta_*)]}{\eta\tau\varrho^2}, 
\frac{6\bE\|\nabla F(\theta_0) - z_1\|_2^2}{\beta_0\varrho^2}, 
\frac{48L_{f,2}^2}{\alpha\tau|I_k|\varrho^2}\sum_{i=1}^n\mathcal{D}_{\mathrm{KL}}(\mu_0^{(i)}\|\mu_*^{i,\theta_0})
\right),\label{Eq:expr:T}
\end{equation}
it holds that
\[
\begin{aligned}
&\frac{1}{T}\Bigg\{
\frac{2\bE[F(\theta_0) - F(\theta_{*})]}{\eta\tau} + \frac{
\bE\|\nabla F(\theta_0) - z_1\|_2^2
}{\beta_0} +\frac{8L_{f,2}^2}{\alpha\tau|I_k|}\sum_{i=1}^n\mathcal{D}_{\mathrm{KL}}(\mu_{0}^{(i)} \| \mu_*^{i, \theta_{0}})
\Bigg\}\\\le& \frac{\varrho^2}{6} + \frac{\varrho^2}{6} + \frac{\varrho^2}{6}\le \frac{\varrho^2}{2}.
\end{aligned}
\]
By setting 
\begin{equation}
\eta\le \frac{\lambda\Reg\alpha|I_k|}{160nL_{f,2}^2}, \quad \eta\tau\le \frac{\beta_0}{4L_{f,2}},\label{Eq:expr:eta}
\end{equation}
where the latter condition is automatically satisfied for sufficiently small $\varrho$, it holds that
\[
\frac{\sum_{k=0}^{T}\bE\big[ \|r_{k}\|_2^2
\big]}{T}\Bigg[
\frac{40\eta L_{f,2}^2n}{\lambda\Reg\alpha|I_k|} + \frac{4L_{f,2}^2\tau^2\eta^2}{\beta_0^2}
-\frac{1}{2}
\Bigg]\le0.
\]
In summary, with proper choices of parameters as in \eqref{Eq:beta:0}, \eqref{Eq:expr:T}, and \eqref{Eq:expr:eta}, it holds that 
\[
\frac{1}{T+1}\sum_{k=0}^T\bE\|\nabla F(\theta_k)\|_2^2\le \varrho^2,
\]
and the output of Algorithm~\ref{Alg:solving:SDRO:single:improved} finds $\varrho$-stationary point in $T$ iterations.
\end{proof}

\section{Additional Results}
\begin{lemma}\label{Lemma:norm:2:diff}
Let us define $\rho_0:={\mu}_{k}^{(i)}$ with $x_0\sim\rho_0$, and
\[
\diff z_t = -\nabla_2G_i(\theta_k,z_0)\diff t + \sqrt{2\Reg}\diff\mathbf{B}_t,\quad 
\mathrm{Law}(z_t)=\rho_t.
\]
Then
\[
\bE_{\rho_{0,t}}\|z_t - z_0\|_2^2\le \frac{4t^2L_{G,2}^2}{\alpha}\mathcal{D}_{\mathrm{KL}}(\rho_0\| \mu_*^{i, \theta_k}) + 2t^2dL_{G,2}\Reg + 2t\Reg d.
\]
\end{lemma}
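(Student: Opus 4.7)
The key observation is that the SDE has a \emph{frozen} drift: $\nabla_2 G_i(\theta_k, z_0)$ depends on $z_0$ rather than on $z_t$. This means the SDE integrates trivially, so the first step is to write
\[
z_t - z_0 \;=\; -\,t\,\nabla_2 G_i(\theta_k, z_0) \;+\; \sqrt{2\Reg}\,\mathbf{B}_t .
\]
Because $\mathbf{B}_t$ is independent of $z_0$ and mean zero, the cross term vanishes in expectation and $\mathbb{E}\|\mathbf{B}_t\|_2^2 = td$. Thus
\[
\mathbb{E}_{\rho_{0,t}}\|z_t - z_0\|_2^2 \;=\; t^2\, \mathbb{E}_{\rho_0}\|\nabla_2 G_i(\theta_k, z_0)\|_2^2 \;+\; 2\Reg\,t\,d .
\]
The $2\Reg t d$ summand already accounts for the last term in the target bound, so the remaining task is to bound $\mathbb{E}_{\rho_0}\|\nabla_2 G_i(\theta_k, z_0)\|_2^2$ by $\frac{4L_{G,2}^2}{\alpha}\mathcal{D}_{\mathrm{KL}}(\rho_0\|\mu_*^{i,\theta_k}) + 2dL_{G,2}\Reg$.

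To control the gradient norm under $\rho_0$, I would move it first to the target distribution $\mu_*^{i,\theta_k}$ via an optimal $W_2$-coupling $(z_0, z^\star)$ with $z^\star\sim\mu_*^{i,\theta_k}$. Since $G_i(\theta_k,\cdot)$ is $L_{G,2}$-smooth in $z$, the triangle/Young inequality gives
\[
\|\nabla_2 G_i(\theta_k, z_0)\|_2^2 \le 2\|\nabla_2 G_i(\theta_k, z^\star)\|_2^2 + 2L_{G,2}^2\|z_0 - z^\star\|_2^2,
\]
so taking the expectation under the optimal coupling yields
\[
\mathbb{E}_{\rho_0}\|\nabla_2 G_i(\theta_k, z_0)\|_2^2 \;\le\; 2\,\mathbb{E}_{\mu_*^{i,\theta_k}}\|\nabla_2 G_i(\theta_k, z)\|_2^2 + 2L_{G,2}^2\,\mathcal{W}_2^2(\rho_0,\mu_*^{i,\theta_k}) .
\]

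The final step is to estimate each of the two terms. For the first, recall that $\mu_*^{i,\theta_k}\propto \exp(-G_i(\theta_k, z)/\Reg)$, so $\nabla \mu_*^{i,\theta_k} = -\mu_*^{i,\theta_k}\nabla_2 G_i/\Reg$. Integration by parts (Stein's identity) gives
\[
\mathbb{E}_{\mu_*^{i,\theta_k}}\|\nabla_2 G_i(\theta_k, z)\|_2^2 \;=\; \Reg\,\mathbb{E}_{\mu_*^{i,\theta_k}}[\Delta_z G_i(\theta_k, z)] \;\le\; \Reg\, d\, L_{G,2},
\]
where the last step uses that the Hessian of $G_i$ in $z$ has operator norm $\le L_{G,2}$, so its trace is $\le dL_{G,2}$. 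For the second, Otto–Villani's theorem says that the $\alpha$-LSI for $\mu_*^{i,\theta_k}$ (Assumption~\ref{Assumption:LSD}\ref{Assumption:LSD:LSI}) implies Talagrand's $T_2$ inequality with the same constant, giving
\[
\mathcal{W}_2^2(\rho_0,\mu_*^{i,\theta_k}) \;\le\; \tfrac{2}{\alpha}\,\mathcal{D}_{\mathrm{KL}}(\rho_0\|\mu_*^{i,\theta_k}).
\]
Assembling the three pieces yields exactly the claimed inequality.

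The main obstacle is purely bookkeeping: one has to be careful that $\mathbf{B}_t$ is independent of the initial condition $z_0$ (otherwise the cross term would not vanish) and that the smoothness constant used in the coupling step is precisely the $L_{G,2}$ from \eqref{Eq:V:z:theta}. Everything else is standard, since the frozen-drift structure bypasses any need for Grönwall-type estimates on $z_t$ itself.
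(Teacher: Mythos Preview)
Your proposal is correct and follows the same decomposition as the paper: integrate the frozen-drift SDE, split into the drift and Brownian parts, and then bound $\mathbb{E}_{\rho_0}\|\nabla_2 G_i(\theta_k,z_0)\|_2^2$. The only difference is that the paper invokes Lemma~12 of \citet{vempala2019rapid} (restated as Lemma~\ref{Lemma:vempala2019rapid}) as a black box for this last bound, whereas you essentially reprove that lemma inline via the $W_2$-coupling, Stein's identity, and Talagrand's inequality --- both routes give the identical constants.
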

\begin{proof}[Proof of Lemma~\ref{Lemma:norm:2:diff}]
It is noteworthy that $z_t$ has the same distribution as 
\[
z_0 - t\nabla_2G_i(\theta_k,z_0) + \sqrt{2t\Reg}\zeta_0,
\]
where $\zeta_0\sim\mathcal{N}(0,\textbf{I}_d)$ is an independent random vector.
Therefore,
\begin{align*}
&\bE_{\rho_{0,t}}\|z_t - z_0\|_2^2=\bE_{\rho_{0,t}}\|t\nabla_2G_i(\theta_k,z_0)  - \sqrt{2t\Reg}\zeta_0\|_2^2\\
\le&t^2\bE_{\rho_{0}}\|\nabla_2G_i(\theta_k,z_0) \|_2^2 + 2t\Reg d\\
\le&
t^2\left[ 
\frac{4L_{G,2}^2}{\alpha}\mathcal{D}_{\mathrm{KL}}(\rho_0\| \mu_*^{i, \theta_k})
+
2dL_{G,2}\Reg
\right] + 2t\Reg d\\
\le&\frac{4t^2L_{G,2}^2}{\alpha}\mathcal{D}_{\mathrm{KL}}(\rho_0\| \mu_*^{i, \theta_k}) + 2t^2dL_{G,2}\Reg + 2t\Reg d,
\end{align*}
where the second inequality is by Lemma~\ref{Lemma:vempala2019rapid} and $L_{G,2}=1+L_{f,2}/\lambda$.
\end{proof}

\begin{lemma}[{Lemma~12 in \citep{vempala2019rapid}}]\label{Lemma:vempala2019rapid}
Suppose $\nu$ satisfies $\alpha$-LSI and $\nu[z]\propto e^{-f(z)}$, with $f(z)$ being $L$-smooth in $z\in\mathbb{R}^d$.
For any distribution $\rho$, it holds that
\[
\bE_{z\sim\rho}[\|\nabla f(z)\|_2^2]\le 
\frac{4L^2}{\alpha}\mathcal{D}_{\mathrm{KL}}(\rho\|\nu) + 2dL.
\]
\end{lemma}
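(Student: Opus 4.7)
The plan is to combine a standard integration-by-parts identity at the reference measure with a smoothness-based Wasserstein transfer argument that moves the bound from $\nu$ to an arbitrary $\rho$. Two features of $f$ drive the proof: the score identity $\nabla\log\nu=-\nabla f$ (so $\|\nabla f\|_2^2$ is exactly the squared score of $\nu$) and the $L$-Lipschitz continuity of $\nabla f$, which will let me pay for the change of measure in terms of $\mathcal{W}_2(\rho,\nu)$ and then in terms of $\mathcal{D}_{\mathrm{KL}}(\rho\|\nu)$ via Talagrand.

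First I would prove the special case $\rho=\nu$. Writing $\nu(z)=e^{-f(z)}/Z$ and using $\nabla e^{-f}=-e^{-f}\nabla f$, integration by parts gives
$$\mathbb{E}_{\nu}[\|\nabla f\|_2^2]=-\frac{1}{Z}\int\nabla f\cdot\nabla e^{-f}\diff z=\frac{1}{Z}\int(\Delta f)\,e^{-f}\diff z=\mathbb{E}_{\nu}[\Delta f],$$
provided that the boundary term $e^{-f}\nabla f$ vanishes at infinity, which it does under $L$-smoothness plus the sub-Gaussian tail of $\nu$ implied by $\alpha$-LSI. Since $L$-smoothness forces $\nabla^2 f\preceq L\mathbf{I}_d$, we get $\Delta f\le dL$ and hence $\mathbb{E}_{\nu}[\|\nabla f\|_2^2]\le dL$. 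Next, for general $\rho$, the Lipschitz bound $\|\nabla f(x)-\nabla f(y)\|_2\le L\|x-y\|_2$ and Young's inequality give the pointwise estimate $\|\nabla f(x)\|_2^2\le 2\|\nabla f(y)\|_2^2+2L^2\|x-y\|_2^2$. Averaging this under the optimal $2$-Wasserstein coupling $\gamma\in\Gamma(\rho,\nu)$ and using the previous step yields
$$\mathbb{E}_{\rho}[\|\nabla f\|_2^2]\le 2\,\mathbb{E}_{\nu}[\|\nabla f\|_2^2]+2L^2\mathcal{W}_2(\rho,\nu)^2\le 2dL+2L^2\mathcal{W}_2(\rho,\nu)^2.$$
To close the argument, I would invoke Otto--Villani: $\alpha$-LSI on $\nu$ implies the Talagrand $T_2$-inequality $\mathcal{W}_2(\rho,\nu)^2\le \tfrac{2}{\alpha}\mathcal{D}_{\mathrm{KL}}(\rho\|\nu)$. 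Substituting reproduces the claimed constant $\tfrac{4L^2}{\alpha}$ alongside the additive $2dL$.

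The only nontrivial technical point is rigor in the integration-by-parts step, which requires controlling the decay of $e^{-f}\|\nabla f\|_2$ at infinity and the integrability of $e^{-f}\|\nabla f\|_2^2$. Both follow from $L$-smoothness (so $\|\nabla f(z)\|_2$ grows at most linearly in $\|z\|_2$) together with the Gaussian-type tail bound on $\nu$ coming from $\alpha$-LSI, so this is standard but not automatic. Everything else—Young's inequality, the optimal-coupling expectation, and Otto--Villani—is off-the-shelf, and this is why I expect this pairing of IBP + Talagrand to give the cleanest route to the result.
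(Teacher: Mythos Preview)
Your proposal is correct and is essentially the original Vempala--Wibisono argument: the integration-by-parts identity $\mathbb{E}_{\nu}[\|\nabla f\|_2^2]=\mathbb{E}_{\nu}[\Delta f]\le dL$, followed by the $L$-Lipschitz transfer via an optimal $\mathcal{W}_2$-coupling and the Otto--Villani/Talagrand inequality. The paper does not supply its own proof of this lemma; it simply quotes the result from \citep{vempala2019rapid}, so there is nothing further to compare.
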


\begin{lemma}[Gr{\"o}nwall’s inequality]\label{Grow:ineq}
For any given function $u:~[0,T)\to\mathbb{R}$ with $T\in(0,\infty]$, of class $\mathcal{C}^1$ satisfying the differential inequality
\[
u'\le au
\]
for some $a\in\mathbb{R}$, it also satisfies the pointwise estimate
\[
u(t)\le e^{at}u(0),\quad \forall t\in[0,T).
\]
\end{lemma}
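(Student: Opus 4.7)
The plan is to use the standard integrating factor trick. Define the auxiliary function $v(t) = e^{-at} u(t)$ on $[0,T)$. Since $u$ is $\mathcal{C}^1$ and $t \mapsto e^{-at}$ is smooth, $v$ is also $\mathcal{C}^1$, and by the product rule
\[
v'(t) = -a e^{-at} u(t) + e^{-at} u'(t) = e^{-at}\bigl(u'(t) - a u(t)\bigr).
\]
The hypothesis $u'(t) \le a u(t)$ gives $u'(t) - a u(t) \le 0$, and since $e^{-at} > 0$, we conclude $v'(t) \le 0$ on $[0,T)$.

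Next, I would invoke the fact that a $\mathcal{C}^1$ function with non-positive derivative on an interval is non-increasing there. Hence $v(t) \le v(0) = u(0)$ for every $t \in [0,T)$. Multiplying both sides by $e^{at} > 0$ yields the desired pointwise estimate
\[
u(t) \le e^{at} u(0), \qquad \forall\, t \in [0,T).
\]

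There is no real obstacle here: the main subtlety is simply recognizing that the sign of $a$ is immaterial (the argument works for $a$ positive, negative, or zero because $e^{-at}$ is always strictly positive), and that no integrability or sign assumption on $u$ itself is needed. The monotonicity of $v$ is obtained purely from the sign of its derivative via the mean value theorem, so the proof requires nothing beyond elementary calculus.
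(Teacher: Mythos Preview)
Your proof is correct and is the standard integrating-factor argument for Gr{\"o}nwall's inequality. The paper itself does not supply a proof of this lemma---it simply states it as a known auxiliary result---so there is nothing to compare against; your argument is a clean, self-contained justification.
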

\end{document}